\newtheorem{theorem}{Theorem}
\newtheorem{remark}[theorem]{Remark}
\newtheorem{lemma}[theorem]{Lemma}
\newtheorem{definition}[theorem]{Definition}
\newenvironment{proof}{{\it Proof:}\quad}{\hfill $\square$ \vskip 12pt}
\newcommand{\N}{\mathbb{N}}
\newcommand{\R}{\mathbb{R}}
\newcommand{\V}{\mathcal{V}}
\newcommand{\E}{\mathcal{E}}
\newcommand{\I}{\mathcal{I}}
\newcommand{\CG}{\mathcal{G}}
\newcommand{\HH}{\mathcal{H}}
\newcommand{\A}{\mathcal{A}}
\newcommand{\wh}{\widehat}
\title{Reduction Algorithms for Persistence Diagrams of Networks: CoralTDA and PrunIT}
\author{%
  Cuneyt G. Akcora \\
  Department of Computer Science\\
  University of Manitoba\\
  \texttt{cuneyt.akcora@umanitoba.ca} \\
  \And
     Yulia R. Gel \\
   Department of Mathematical Sciences \\
   University of Texas at Dallas \\
   National Science Foundation \\
   \texttt{ygl@utdallas.edu} \\
   \AND
   Murat Kantarcioglu \\
   Department of Computer Science \\
   University of Texas at Dallas \\
   \texttt{muratk@utdallas.edu} \\
   \And
   Baris Coskunuzer \\
   Department of Mathematical Sciences \\
   University of Texas at Dallas \\
   \texttt{coskunuz@utdallas.edu} \\
}
\begin{document}

\maketitle
\begin{abstract}

Topological data analysis (TDA) delivers invaluable and complementary information on the intrinsic properties of data inaccessible to conventional methods. However, high computational costs remain the primary roadblock hindering the successful application of TDA in real-world studies, particularly with machine learning on large complex networks.

Indeed, most modern networks such as citation, blockchain, and online social networks often have hundreds of thousands of vertices, making the application of existing TDA methods infeasible. We develop two new, remarkably simple but effective algorithms to compute the exact persistence diagrams of large graphs to address this major TDA limitation. First, we prove that $(k+1)$-core of a graph $\CG$ suffices to compute its $k^{th}$ persistence diagram, $PD_k(\CG)$. Second, we introduce a pruning algorithm for graphs to compute their persistence diagrams by removing the dominated vertices. Our experiments on large networks show that our novel approach can achieve computational gains up to 95\%. 

The developed framework provides the first bridge between the graph theory and TDA, with applications in machine learning of large complex networks. Our implementation is available at \href{https://github.com/cakcora/PersistentHomologyWithCoralPrunit}{github.com/cakcora/PersistentHomologyWithCoralPrunit}.

\end{abstract}

\section{Introduction}

Topological data analysis (TDA) has emerged as powerful machinery in machine learning (ML), allowing us to extract complementary information on the observed objects, especially, from graph-structured data. In particular, TDA has become quite popular in various ML tasks, ranging from bioinformatics~\cite{kovacev2016using, nielson2015topological},  
finance~\cite{leibon2008topological, akcora2019bitcoinheist}, material science~\cite{ichinomiya2017persistent},  
biosurveillance~\cite{segovia2021tlife, chen2022tamp},  network analysis~\cite{sizemore2017classification,  
	carstens2013persistent}, as well as insurance and agriculture~\cite{yuvaraj2021topological, jiang2022learning} (see the literature overviews~\cite{amezquita2020shape, chazal2021introduction} and the TDA applications library~\cite{giunti22}). 
Recently there has emerged a highly active research area that combines the PH machinery with geometric deep learning (GDL) methods~\cite{hofer2017deep, zhao2019learning, horn2021topological}. 

Persistent homology (PH) is a key approach in TDA, allowing us to extract the evolution of subtler patterns in the data shape dynamics at multiple resolution scales, which are not accessible to more conventional, non-topological methods~\cite{carlsson2009topology}. The main idea is to construct a nested sequence of topological spaces (filtration) induced from the data, and record the evolution of topological features in this sequence. In other words, the extracted patterns, or homological features, along with how long such features persist throughout the considered filtration of a scale parameter, convey a critical insight into salient graph characteristics and hidden mechanisms behind system organization. 

PH has been very effective in many graph machine learning tasks, such as graph and node classification~\cite{rieck2019persistent, cai2020understanding, zhao2020persistence, hofer2020graph}, link prediction~\cite{benson2018simplicial, yan2021link} and anomaly detection~\cite{bruillard2016anomaly, ofori2021topological}. 

Nevertheless, while PH has shown promise in various graph learning applications, prohibitive computational costs of PH constrain its wider usage. Indeed, most PH studies are limited to small graphs with a few thousand vertices at most. The problem is that the complexity of the standard PH algorithm is cubic in the number of simplices~\cite{otter2017roadmap}, so one needs to limit homology computations to $0$-th and $1$-th levels only. Computation of higher-level persistence for relatively large graphs can take days or weeks.  

In this paper, we aim to address this fundamental bottleneck in the application of TDA to large networks by introducing two new efficient algorithms which significantly reduce the cost of computing persistence diagrams (PD) for large real-world networks: {\it CoralTDA} and {\it PrunIT}.  

\vskip3pt

\noindent {\bf CoralTDA Algorithm:} Based on our observation that many vertices in large real-world networks have low degrees and do not contribute to PDs in higher dimensions, we developed the CoralTDA algorithm (Theorem \ref{thm:kcores}) where we prove that $(k+1)$-core $\CG^{k+1}$ of a graph $\CG$ is enough to compute the $k^{th}$ PD of the graph, i.e. $PD_k(\CG)=PD_k(\CG^{k+1})$. 	

Using this property, with a much smaller core graph $\CG^{k+1}$, we compute the exact higher persistence diagram $PD_k(\CG)$ losing no information. Our experiments show that even for lower dimensional topological features, such as $k=1$, we reduce the graph order by up to 73\% for some datasets (See Figure~\ref{fig:vertex}). Our findings show that many real-life data sets exhibit nontrivial second and third persistence diagrams, facilitating various classification problems. On the other hand, our reduction reaches 100\% for the third or higher dimensions in several networks, implying that higher PDs are trivial for these datasets.

As a result, our reduction approach improves our understanding of the existence of higher-order dimensional holes and their role in the organization of complex networks.

\vskip3pt

\noindent {\bf PrunIT Algorithm:} We further develop a topologically simple but highly efficient algorithm to facilitate computations of PDs of graphs for any dimension. In particular, for a graph $\CG=(\V,\E)$ and filtration by clique complexes, we show that removing (pruning) a dominated vertex from the graph does not change PDs at any level, provided that the dominated vertex enters the filtration after the dominating vertex (\Cref{thm:reduction}). 

Our experiments indicate that the new algorithm is highly efficient in PD computations of a broad category of large graphs from 100K to 1M vertices, and it can reach 95\% vertex reduction (see Table~\ref{tab:prunitresults}). 

Further, when we combine CoralTDA and PrunIT algorithms, we can significantly reduce the graph sizes for the computation of PDs (Figure \ref{fig:combinedresults}).  

\medskip

{\bf We summarize the key novelty of our contributions as follows:} 

\begin{itemize}
	
	\item We show that the graphs’  ($k+1$)th and higher persistence diagrams only depend on their $k$-cores. 
	\item We introduce a highly effective pruning algorithm that significantly reduces the graph size without changing any persistence diagram of the original graph.
	\item Our experiments in large datasets and large graphs show up to 95\% reduction in the graph size for the computation of persistence diagrams.
	\item With our reduction algorithms, highly successful TDA methods can be applied to very large graphs and large datasets where previously its use was constrained by prohibitive computational costs.
\end{itemize}

\section{Related Work} 
\label{sec:related}

There are mainly two settings in practice where we use PH to obtain a topological fingerprint of a dataset. The first one is the \textit{point cloud setting}, where the dataset comes as a point cloud in an ambient space $\R^n$. Then, we define PH by constructing a sequence of simplicial complexes induced by the pair-wise distances of data points (Vietoris-Rips filtration) and keeping track of the topological changes in this sequence~\cite{zomorodian2005computing, edelsbrunner2010computational}. The second one is \textit{the network setting} where the typical PH construction uses a filtering function on the network. By construction, while the principal identifier to define PH in the point cloud setting is the pair-wise distances of points, the principal identifier in the network setting is the filtering function. Because of this, PH machinery works differently in a network setting, as explained in Section \ref{sec:background}.

There are several works in the point cloud setting to reduce the computational costs and run-time of the persistence diagrams. Malott and Wilsey used the idea of data reduction and data partitioning~\cite{malott2019fast}. Mischaikow and Nanda brought the discrete Morse Theory of geometric topology to the combinatorial setting~\cite{mischaikow2013morse}. In \cite{obayashi2018volume,vcufar2021fast, dey2019persistent, escolar2016optimal}, the authors studied the same problem with different approaches in the point cloud setting.  

While several works improve the run-time of PH in the point cloud setting, only a few of them could reduce the computational costs of persistent homology in the network setting. An idea is to use discrete Morse Theory to capture the topological features occurring during the process~\cite{kannan2019persistent} by applying the techniques developed in~\cite{mischaikow2013morse} to the network setting. 

While the computational complexity of $k^{th}$ persistence diagram (PD) is $\mathcal{O}(n^3)$ where $n$ is the number of $k$-simplices~\cite{otter2017roadmap}, \cite{mischaikow2013morse} achieves $\mathcal{O}(m^2\times n\log{n})$ where $m$ is the number of critical $k$-simplices. With the additional time to find the critical $k$-simplices in each filtration step, the computational complexity  $\mathcal{O}(m^2\times n\log{n})$ is not scalable for very large networks.

\section{Persistent Homology} 
\label{sec:background}

This part provides a background on the theory of persistent homology. Homology $H_k(X)$ is an essential invariant in algebraic topology, which captures the information of the $k$-dimensional holes (connected components, loops, cavities) in a topological space $X$. For example, a connected component in a graph is a zero-dimensional hole, whereas a graph loop is a 1-dimensional hole. Persistent homology is a way to use this invariant to keep track of the changes in a controlled topological space sequence induced by the original space $X$. For basic background on persistent homology, see \cite{edelsbrunner2010computational, dey2022computational}.	

There are several ways to use PH in a network setting, such as power filtration or using different complexes (e.g., Vietoris-Rips, \v{C}ech complexes) to construct the filtration for a given filtering function \cite{aktas2019persistence}. We focus on the most common methods to define PH for graphs: sub/superlevel filtrations obtained by a filtering function and the clique (flag) complexes. Sub/superlevel filtrations are the most common methods because one can inject domain information into the PH process if the chosen filtering function comes from the network domain (e.g., atomic number in protein networks, transaction amount for blockchain networks). Note that our results can be generalized to the persistent homology defined with a filtering function for different complexes.  

Throughout the paper, we use the terms \textit{graph} and \textit{network} interchangeably. Let $\CG$ be a graph with vertex set $\V=\{v_r\}$ and edge set $\E=\{e_{rs}\}$, i.e. $e_{rs}\in \E$ if there is an edge between the vertex $v_r$ and $v_s$ in $\CG$. Let $f:\V\to \R$ be a filtering function defined on the vertices of $\CG$. Let $\I=\{\alpha_i\}$ be a threshold set with $\alpha_0=\min_{v_r \in \V} f(v_r)<\alpha_1<...<\alpha_m=\max_{v_r \in \V} f(v_r)$. For $\alpha_i\in \I$, let $\V_i=\{v_r\in\V\mid f(v_r)\leq \alpha_i\}$. Let $\CG_i$ be the induced subgraph of $\CG$ by $\V_i$, i.e. $\CG_i=(\V_i,\E_i)$ where $\E_i=\{e_{rs}\in \E\mid v_r,v_s\in\V_i\}$. Let $\wh{\CG}_i$ be the clique complex of $\CG_i$. A clique complex is obtained by filling in all the $(k+1)$-complete subgraphs with $k$-simplices. In other words, if the vertices  $\{v_{r_0},v_{r_1},...,v_{r_k}\}\subset \CG_i$ are pairwise connected by an edge in $\CG$, then the clique complex $\wh{\CG}_i$ contains a $k$-simplex $\sigma=[v_{r_0},v_{r_1},...,v_{r_k}]$. This simplicial complex $\wh{\CG}_i$ obtained by filling in all complete subgraphs is called \textit{the clique complex} of $\CG_i$. This construction induces a nested sequence of high dimensional simplicial complexes:

$$\wh{\CG}_0\subset \wh{\CG}_1\subset \wh{\CG}_2\subset ...\subset \wh{\CG}_m.$$

This sequence of simplicial complexes is called \textit{the sublevel filtration} for $\CG$. Superlevel filtrations can be defined similarly by considering the generating sets $\{f(v_r)\geq \alpha_i\}$ instead of $\{f(v_r)\leq \alpha_i\}$ above. Here, $\wh{\CG}_i$ can be taken as the different simplicial complexes induced by $\CG_i$ which gives different types of filtrations~\cite{aktas2019persistence}. After obtaining the filtration, one considers the homology groups $H_k(\wh{\CG}_i)$ of each simplicial complex $\wh{\CG}_i$. The homology group $H_k(X)$ keeps the information of $k$-dimensional topological features in the simplicial complex $X$.

Persistent homology keeps track of the topological changes in the sequence $\{\wh{\CG}_i\}$ by using the homology groups $\{H_k(\wh{\CG}_i)\}$. When a $k$-dimensional hole $\sigma$ (a connected component, loop or cavity) appears in $H_k(\wh{\CG}_i)$, we mark $b_\sigma=\alpha_i$ as its birth time. The feature $\sigma$ can disappear at a later time in  $H_k(\wh{\CG}_j)$ by merging with another feature or by being filled in. Then, we mark $d_\sigma=\alpha_j$ as its death time. Hence, we say that $\sigma$ persists along the interval $[b_\sigma,d_\sigma)$, i.e. $[\alpha_i,\alpha_j)$. The longer the interval ($d_\sigma- b_\sigma$), the more persistent the feature $\sigma$. 

The multi-set $PD_k(\CG,f)=\{(b_\sigma, d_\sigma) \mid \sigma\in H_k(\wh{\CG}_i) \mbox{ for } b_\sigma\leq i<d_\sigma\}$ is called the {\em $k^{th}$ persistence diagram} of $(\CG,f)$ which is the collection of $2$-tuples marking the birth and death times of $k$-dimensional holes $\{\sigma\}$ in $\{\wh{\CG}_i\}$. In particular, $PD_k(\CG,f)$ represents the $k^{th}$ PD of the sublevel filtration, induced by the filtering function $f:\V\to\R$. For brevity, we suppress $f$ and use $PD_k(\CG)$ throughout the text.

\section{CoralTDA Reduction and Higher Persistence Diagrams} 
\label{sec:coresanddiagrams}

A \textit{$k$-core} $\CG^k$ of a graph $\CG$ is the subgraph of $\CG$ obtained by iteratively deleting all vertices (and edges connected to it) with degree less than $k$~\cite{seidman1983network}. In other words, $\CG^k$ is the largest subgraph of $\CG$ where all the vertices have a degree of at least $k$. 

\begin{wrapfigure}{r}{2in} 
	\vspace{-.3cm}
	\begin{center}
		\includegraphics[width=1.9in]{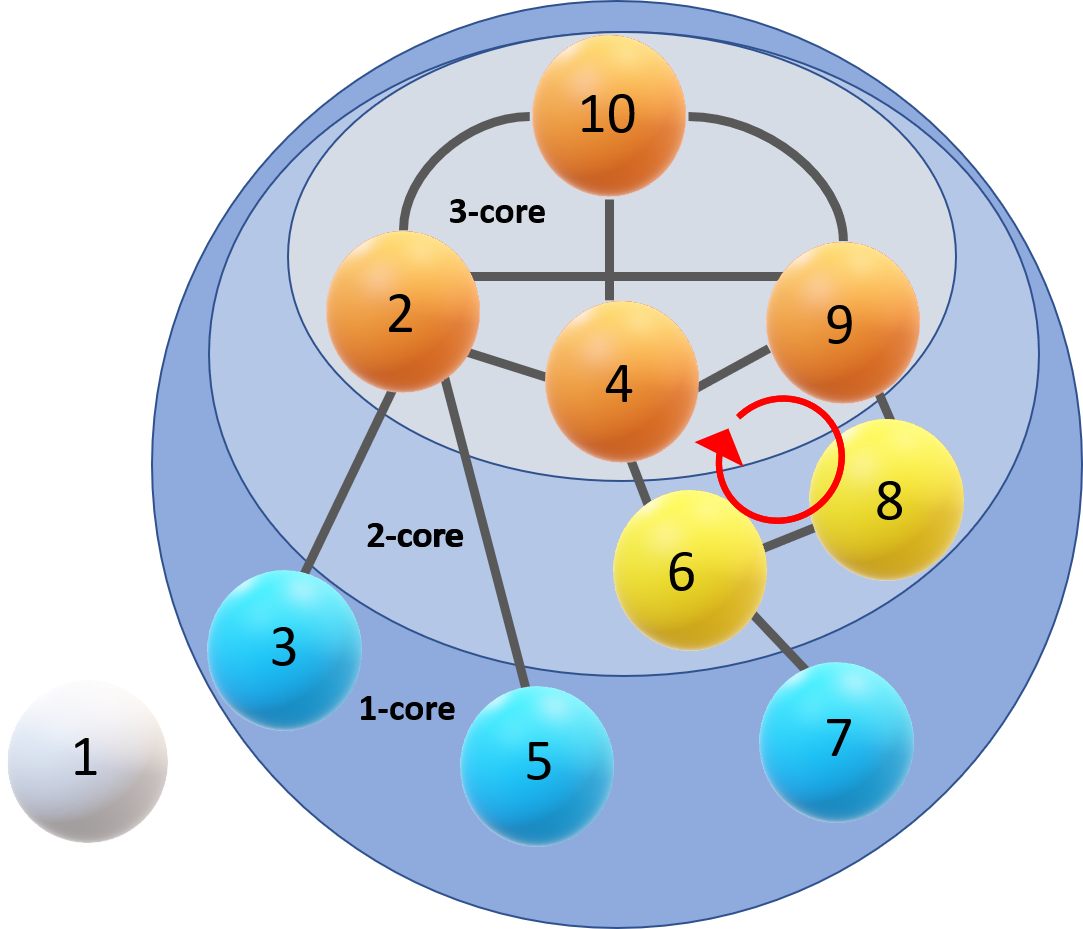}
		\caption{\footnotesize K-core decomposition of a graph of 10 vertices. Vertex 1 has no edges and belongs to the $0th$-core. A one-dimensional hole of vertices 4, 6, 8, and 9 is shown with a red circle.}
		\label{fig:core}
	\end{center}
	\vspace{-.3cm}
\end{wrapfigure} 

Figure~\ref{fig:core} shows a graph with its core structure. Here, vertex $1$ belongs to 0-core as it is disconnected from the graph. Vertex colors indicate shared coreness. When we use vertex degree as the filtering function and allow graph cliques of size three at most, the only one-dimensional hole (shown with the red circle) appears at degree 4 for vertices 4, 6, 8, and 9. Vertices 3, 5, and 7 can only contribute to 0-dimensional holes because their degree is 1. Similarly, 8 can only contribute to 0 and 1-dimensional holes because its degree is 2.

The $k$-core decomposition is a fundamental operation in many areas such as graph similarity matching \cite{nikolentzos2018degeneracy}, graph clustering~\cite{giatsidis2014corecluster}, network visualization~\cite{giatsidis2011evaluating}, anomaly detection~\cite{shanahan2013large} and robustness analysis~\cite{burleson2020k}.

A naïve implementation of $k$-core iteratively deletes vertices whose degree falls below a $k$, until it deletes all vertices from the graph. The implementation has a computational complexity of $\mathcal{O}(m\log{}n)$, where $m$ and $n$ are the number of edges and vertices in the network, respectively. Batagelj and Zaversnik reduce the complexity to $\mathcal{O}(m+n)$ \textquote{by keeping an in-memory array of all possible degree values and keeping track of bin boundaries}~\cite{batagelj2003m}.

\subsection{Relation between \texorpdfstring{$\wh{\CG}_i$}{Gi} and \texorpdfstring{$\wh{\CG}^k_i$}{Gki}} 

Our main idea is to compute high-dimensional persistence features on their associated graph cores. Note that a $k$-clique in graph theory corresponds to a $(k-1)$-simplex in PH; a $k$-clique (complete subgraph of order $k$) in $\CG$ induces a $(k-1)$-simplex in $\wh{\CG}$.

The clique complex $\wh{\CG}$ is a simplicial complex of dimension $K-1$ where $K$ denotes the degeneracy of $\CG$, i.e. $K=\max\{k\mid \CG^k\neq \emptyset\}$. That is, $\wh{\CG}$ contains a $(k-1)$-simplex if and only if its $k$-core  $\CG^k$ is not empty. 

For any $i,k$, we have the $k$-core of $\CG_i$ contained in $\CG_i$ by construction, i.e. $\CG^k_i\subset \CG_i$. This implies that the same holds for their clique complexes, i.e. $\wh{\CG}^k_i\subset \wh{\CG}_i$. On the other hand, if one restricts the original filtering function $f:\V\to \R$ to the vertices $\V^k$ of the $k$-core of $\CG$, we have $f:\V^k\to\R$. By using the same thresholds for $f:\V^k\to\R$, we obtain the filtration $\wh{\CG}^k_0\subset \wh{\CG}^k_1\subset \wh{\CG}^k_2\subset ...\subset \wh{\CG}^k_m$. This will induce the persistence diagram $PD_r(\CG^k)$ for any dimension $r$. 

Since for any $i,k$, $\wh{\CG}^k_i\subset \wh{\CG}_i$, we have the following diagram.
\begin{equation}\label{eqn1}
\begin{array}{ccccccc}
\wh{\CG}^k_0 & \subset  & \wh{\CG}^k_1 & \subset & ... & \subset & \wh{\CG}^k_m  \\
\cap & \ & \cap & \ & \ & \ & \cap \\
\wh{\CG}_0 & \subset  & \wh{\CG}_1 & \subset & ...& \subset & \wh{\CG}_m  
\end{array}
\end{equation}

Notice that for any $j\geq k-1$, if there is a $j$-cycle $\sigma$ living in $C_j(\wh{\CG}_i^k)$, then we have $\sigma\subset C_j(\wh{\CG}_i)$ as $\wh{\CG}^k_i\subset \wh{\CG}_i$. In the following, we show that for these cycles, the converse is also true, and we show the equivalence in the homology level.

\vspace{.2cm}

\begin{remark} \label{remark:restrict_f} \normalfont [Restriction of $f$ to $\V^k$] Notice that the filtering function $f:\V^k\to \R$ on $\V^k$, the vertices of $\CG^k$,  is defined directly by restricting values of $f:\V\to\R$ to the subset $\V^k\subset \V$. In particular, if $f:\V\to\R$ is a function coming from the graph attributes (such as vertex degree), then $f:\V^k\to\R$ may not be the same function coming from the graph attributes induced by the graph $\CG^k$. For example, let $f$ be the degree function on $\V$, the vertices of $\CG$. Then, for any $w\in \V^k$, $f(w)$ is the degree of $w$ in $\CG$, \textit{not its degree in $\CG^k$}. While the $k$-core graph $\CG^k$ changes, we \textit{do not update} the values of $f$ on $\V^k$ according to its attribute definition in $\CG^k$, but we keep the same values in the original function $f:\V\to\R$ for the remaining vertices in $\V^k\subset \V$. In graph terms, this corresponds to computing vertex filtering (activation) values on the original graph but using the edges of the reduced graph to extract simplices.
\end{remark}

\subsection{CoralTDA Reduction} 

Our CoralTDA technique shows that lower degree vertices do not affect higher persistence diagrams, i.e., CoralTDA yields exact results.  Note that in the following result, even though the graph size changes, we keep the same filtering function $f:\V\to\R$ with the original values.  See Remark~\ref{remark:restrict_f} for further details.   We give the proof of the following theorem in Appendix.

\begin{theorem} \label{thm:kcores} Let $\CG$ be an unweighted connected graph. Let $f:\V\to\R$ be a filtering function on $\CG$. Let $PD_k(\CG,f)$ represent the $k^{th}$ persistence diagram for the sublevel filtration of the clique complexes. Let $\wh{\CG}^k$ be the $k$-core of $\CG$. Then, for any $j\geq k$
	$$PD_j(\CG,f)=PD_j(\CG^{k+1},f).$$ 		
\end{theorem}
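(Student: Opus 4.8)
## Proof Proposal

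The plan is to show that the inclusion of filtered simplicial complexes $\wh{\CG}^{k+1}_i \hookrightarrow \wh{\CG}_i$ induces an isomorphism on $H_j$ for every threshold index $i$ and every $j \geq k$, and that these isomorphisms are compatible with the inclusion maps along the filtration (i.e. they commute with the horizontal maps in the diagram \eqref{eqn1}). Once that is established, the Persistence Equivalence Theorem (a ladder of commuting isomorphisms between two persistence modules induces an equality of persistence diagrams) gives $PD_j(\CG,f) = PD_j(\CG^{k+1},f)$ immediately. Compatibility with the filtration maps is automatic here because everything in sight is an inclusion of complexes, so the real content is the level-wise homology isomorphism.

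To prove the level-wise statement, I would argue by induction on the number of vertices removed in passing from $\CG_i$ to $\CG^{k+1}_i$, reducing to the case of deleting a single vertex $v$ whose degree in $\CG_i$ is at most $k$ (such a vertex exists whenever $\CG_i \neq \CG^{k+1}_i$, by definition of the core as an iterative low-degree deletion). The key local observation is that the link of $v$ in $\wh{\CG}_i$ is the clique complex on the neighbors of $v$, a set of size $\leq k$, hence a simplicial complex of dimension $\leq k-1$; in particular $\wt{H}_{j-1}(\mathrm{lk}(v)) = 0$ for all $j-1 \geq k-1$, i.e. for all $j \geq k$. Deleting $v$ from $\wh{\CG}_i$ removes exactly the open star of $v$, and the Mayer--Vietoris sequence (or the long exact sequence of the pair $(\wh{\CG}_i, \wh{\CG}_i \setminus v)$, whose relative homology is that of the pair $(\mathrm{star}(v), \mathrm{lk}(v))$, which is the suspension-type term built from $\wt H_*(\mathrm{lk}(v))$) shows that the inclusion $\wh{\CG}_i \setminus v \hookrightarrow \wh{\CG}_i$ is a homology isomorphism in degrees $j \geq k$, and also in degree $j = k$ one needs to check injectivity/surjectivity at the boundary of the range carefully — this is where the hypothesis $j \geq k$ (not $j \geq k-1$) is used.

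Iterating the single-vertex deletion through the whole core-peeling process, and noting that at every intermediate stage the vertex being removed still has degree $\leq k$ in the current graph (this is exactly the defining property of the $(k+1)$-core algorithm), yields $H_j(\wh{\CG}^{k+1}_i) \cong H_j(\wh{\CG}_i)$ for all $i$ and all $j \geq k$, induced by inclusion. Combining this with the naturality discussion above and the Persistence Equivalence Theorem completes the proof. One technical point worth stating explicitly: since $f$ is not updated on the surviving vertices (Remark~\ref{remark:restrict_f}), the threshold set and the vertex sublevel sets $\V_i$ restrict consistently, so $\wh{\CG}^{k+1}_i$ really is the $(k+1)$-core of $\CG$ intersected with the sublevel set — one should check that taking the $(k+1)$-core and taking the sublevel subgraph can be interleaved so that the diagram \eqref{eqn1} is the correct object to feed into the equivalence theorem.

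The main obstacle I anticipate is the boundary case of the degree range: showing the inclusion is an isomorphism on $H_k$ itself (not merely on $H_j$ for $j > k$) requires controlling both ends of the relevant exact sequence — surjectivity onto $H_k(\wh{\CG}_i)$ uses $\wt H_{k-1}(\mathrm{lk}(v)) = 0$, which barely holds since $\dim \mathrm{lk}(v) \leq k-1$ and a $(k-1)$-dimensional clique complex on $\leq k$ vertices is either empty, contractible, or has reduced homology only in degree $k-2$ at most when it is the boundary of a simplex — so in fact $\mathrm{lk}(v)$, being a full clique complex on $\le k$ vertices, is always contractible or empty, and this is the clean reason the argument works; making that elementary claim precise and then plugging it into the exact sequence is the crux.
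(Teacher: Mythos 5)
Your Mayer--Vietoris strategy is viable, but the statement you yourself single out as the crux is false as written: in a clique complex, $\mathrm{lk}(v)$ is the clique complex of the \emph{induced} subgraph on the neighbours of $v$, and nothing forces those neighbours to be pairwise adjacent, so the link is in general not ``a full clique complex'' and not contractible (a vertex with two non-adjacent neighbours has link equal to two isolated points, with $\wt H_0\neq 0$; a neighbourhood inducing a $4$-cycle gives a link with $\wt H_1\neq 0$). Likewise your earlier ``in particular'' --- deducing $\wt H_{j-1}(\mathrm{lk}(v))=0$ for all $j-1\ge k-1$ from $\dim \mathrm{lk}(v)\le k-1$ alone --- fails exactly at the boundary degree $j-1=k-1$, since a $(k-1)$-dimensional complex can carry top-dimensional homology. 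What actually saves the step is a weaker, correct claim: if $\deg v\le k$ then $\mathrm{lk}(v)$ has at most $k$ vertices, so it has dimension $k-1$ only if it contains a $(k-1)$-simplex on all of its vertices, in which case it \emph{is} the full simplex, hence contractible; therefore $\wt H_m(\mathrm{lk}(v))=0$ for every $m\ge k-1$, even though lower reduced homology may well be nonzero. That vanishing in degrees $\ge k-1$ (not contractibility of the link) is precisely what the exact sequence needs to make $H_j(\wh{\CG}_i-v)\to H_j(\wh{\CG}_i)$ both injective and surjective for all $j\ge k$. So the local step is repairable, but the lemma you rely on must be restated.

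The second gap is the one you flag but leave unresolved, and it is real: peeling $\CG_i$ by \emph{any} vertex of degree $\le k$ in $\CG_i$ terminates at $(\CG_i)^{k+1}$, the core of the sublevel graph, whereas the theorem concerns $(\CG^{k+1})_i$, the sublevel graph of the core; these differ in general, because a vertex may have degree $\le k$ in $\CG_i$ yet have high degree in $\CG$ and belong to $\CG^{k+1}$. Hence the implicit identity behind ``such a vertex exists whenever $\CG_i\neq\CG^{k+1}_i$'' is not available. Two standard repairs: peel $\CG_i$ only by the vertices discarded in the global peeling of $\CG$, in the global order (each such vertex has degree $\le k$ in the current restricted graph, since passing to a sublevel set only lowers degrees), which lands exactly on $(\CG^{k+1})_i$; or use the sandwich $(\CG_i)^{k+1}\subseteq(\CG^{k+1})_i\subseteq\CG_i$ together with $((\CG^{k+1})_i)^{k+1}=(\CG_i)^{k+1}$ and a two-out-of-three argument on the induced maps. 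With these two repairs your route is genuinely different from the paper's: the paper works at the chain level, showing that every $j$-cycle with $j\ge k$ gives each of its vertices degree $\ge k+1$ from within the cycle itself (so the cycle survives the peeling) and that $C_{j+1}$ is unchanged, whence kernels and images of the boundary maps coincide; your link/Mayer--Vietoris argument is more topological and localizes the issue nicely, but it carries exactly the two burdens above, which the paper's cycle-based argument avoids.
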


{\em Outline of the proof:} We show that for any nontrivial $k$-homology class $\sigma$ in the original clique complex $\wh{\CG}$, a generating $k$-cycle $S$ in this homology class also lives in a much smaller subcomplex: the clique complex of the $(k+1)$-core ( $\wh{\CG}^{k+1}$). That is, we prove that any vertex in the $k$-cycle $S$ must have a degree at least $k+1$ where this degree count comes only from the $k$-simplices of $S$, and removing the lower degree vertices from $\CG$ has no effect on the existence of such $S$. We give the proof of the theorem in Appendix.

The above result indicates that $k^{th}$ persistence diagram information can be obtained by only considering the $(k+1)$-core of a graph. CoralTDA is an effective tool for reducing computational costs to compute higher persistence diagrams. See  \Cref{fig:vertex} for reduction results for various datasets.

\begin{figure*}[ht]
	\begin{subfigure}{.24\textwidth}
		\centering
		\includegraphics[width=.99\linewidth]{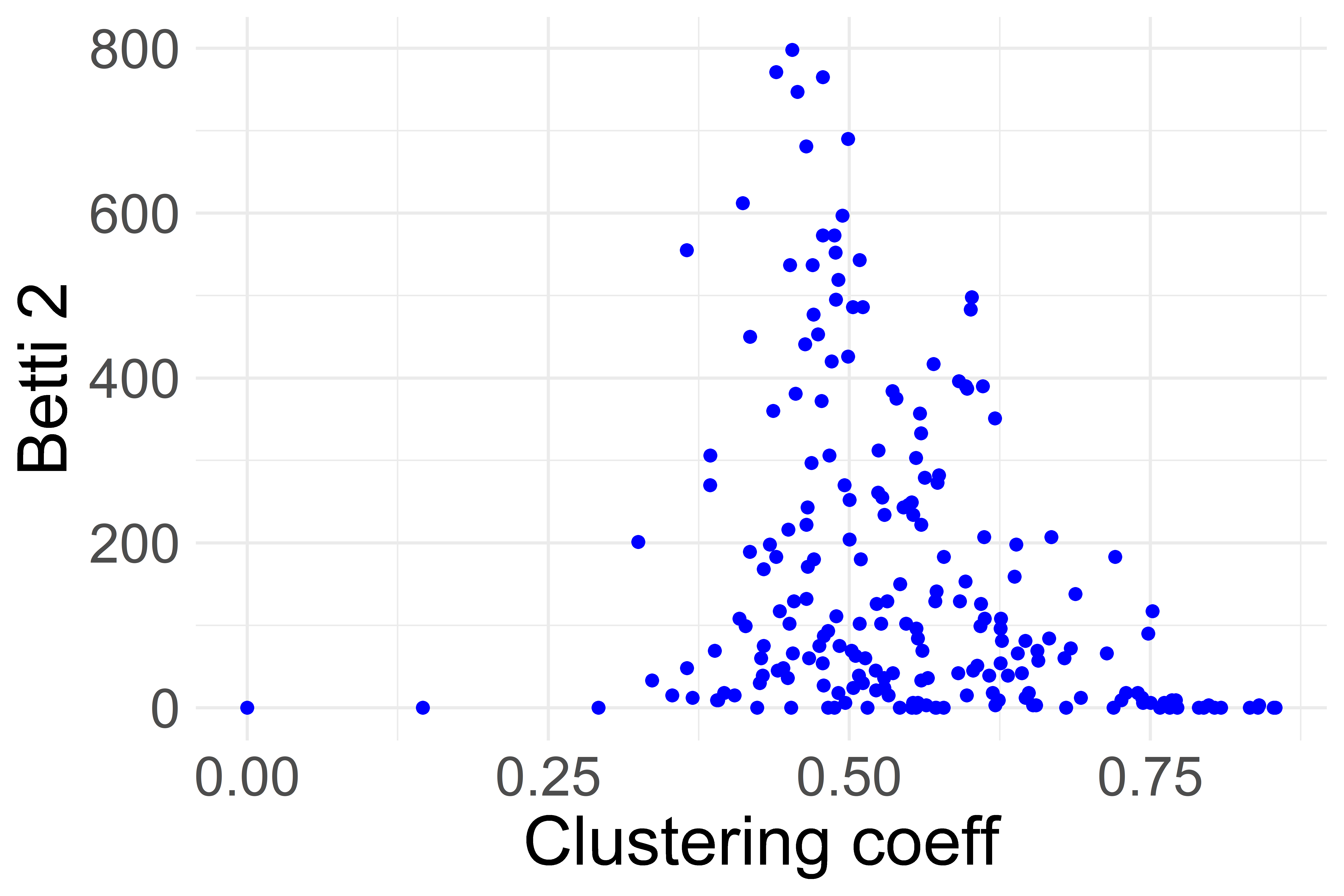}  \end{subfigure}
	\begin{subfigure}{.24\textwidth}
		\centering
		\includegraphics[width=.99\linewidth]{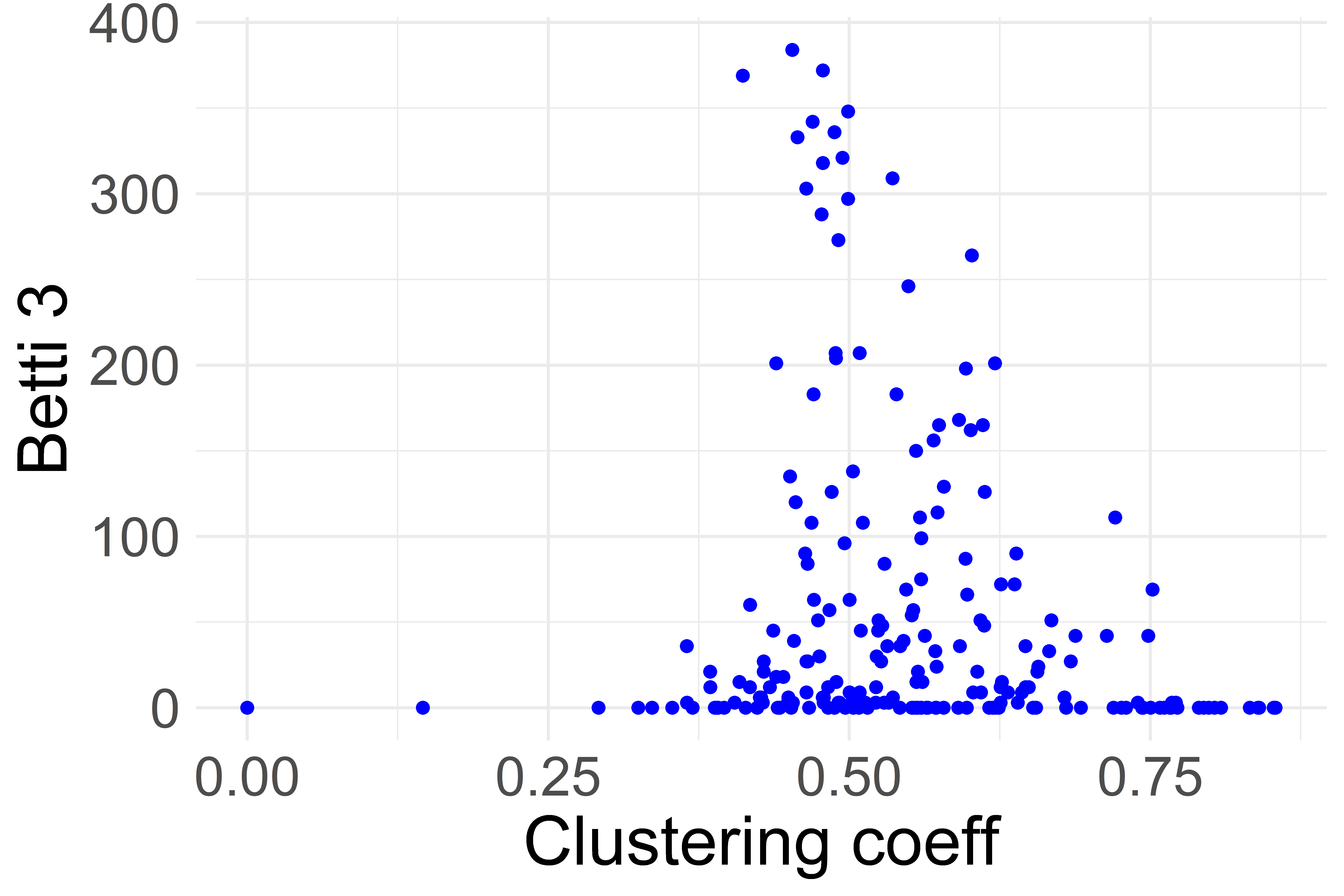} 
	\end{subfigure}
	\begin{subfigure}{.24\textwidth}
		\centering
		\includegraphics[width=.99\linewidth]{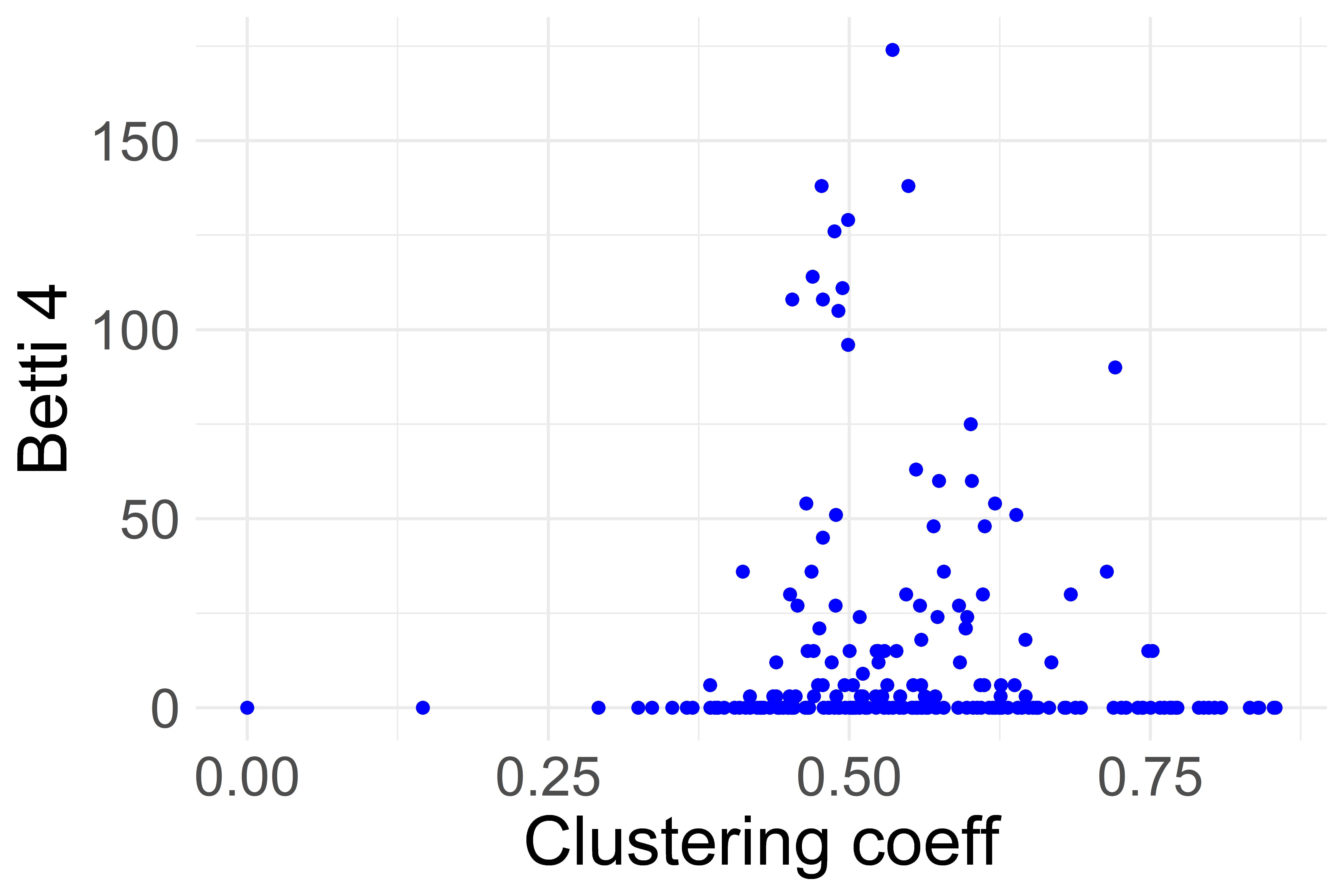}  \end{subfigure}
	\begin{subfigure}{.24\textwidth}
		\centering
		\includegraphics[width=.99\linewidth]{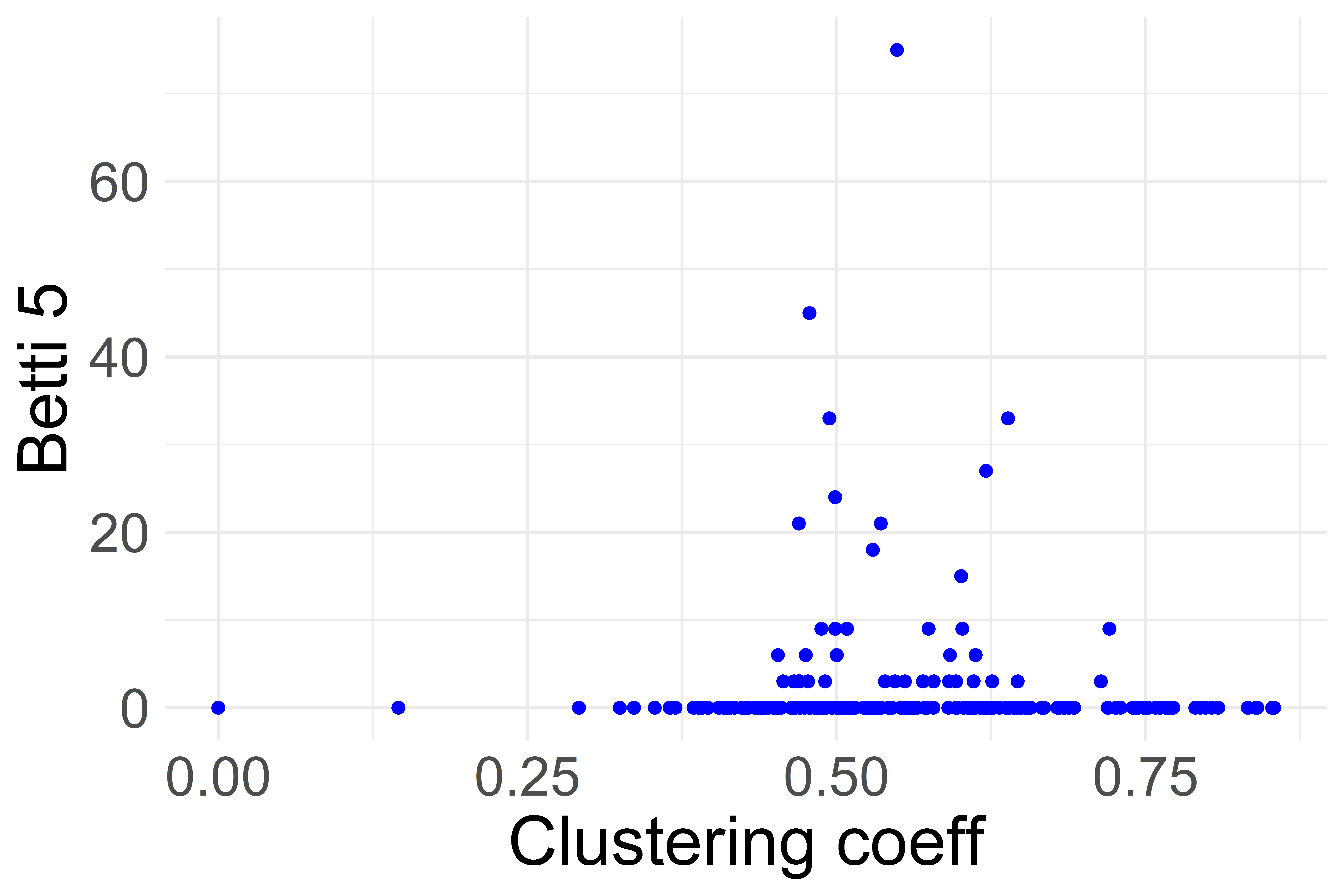} 
	\end{subfigure}
	\caption{\footnotesize Clustering coefficients vs. number of topological features in Facebook and Twitter datasets. Each data point is a graph instance. We observe hundreds of higher topological features in these datasets which can be highly useful for various graph learning tasks.}
	\label{fig:clussocial}
\end{figure*}

\begin{remark}  \label{remark:kahle} \normalfont [Higher PDs in Random Networks vs. Real-Life Networks] Note that by Kahle's seminal result~\cite{kahle2009topology}, to observe nontrivial Betti numbers for higher dimensions in Erd\'os-R\'enyi graphs $G(n,p)$, the average degree must be very high. In particular, for a graph $G(n,p)$, in order to have nontrivial $k^{th}$-homology in its clique complex, Kahle proved that for $p=n^\alpha$, $\alpha$ should be between $-1/k$ and $-1/(k+1)$. In terms of average degree $n\times p$, this means the average degree should be between $n^{(k-1)/k}$ and $n^{k/(k+1)}$. For instance, for dimension $k=2$, the average degree should be between $\sqrt{n}$ and $\sqrt[3]{n^2}$. For a graph order of $n=1000$, this implies that the average degree should be between $31$ and $100$ to have a nontrivial second homology in random networks. However, in real-life networks, our results show that higher Betti numbers are prevalent in much sparser graphs (see   \cref{fig:clussocial} and appendix \cref{fig:cluskernel}). These findings can be further used to derive error bounds and the associated loss of topological information when $G(n,p)$ is employed to approximate real-world network phenomena, for instance, in the case of synthetic power grid networks and other cyber-physical systems.  
\end{remark}

In the following, we give another effective method to reduce the size of a graph $\CG$ without affecting the persistence diagrams $PD_r(\CG)$ \textit{for any dimension} $r\geq 0$.

\section{PrunIT Algorithm}
\label{sec:trim}   

This section introduces another effective reduction technique for computing persistence diagrams of graphs induced by a filtering function. In particular, we show that for a graph $\CG$, and filtering function $f:\V\to \R$, removing (pruning) specific vertices from the graph does not change the persistent homology at any level. The result is valuable because the algorithm may reduce the vertex set considerably (Table \ref{tab:prunitresults}). Furthermore, as our experiments show, the reduced vertex set can significantly lower the simplex count, leading to much shorter computational times for persistent homology (see Figure \ref{fig:vertex} and appendix Figure \ref{fig:complex}).

In algebraic topology, homotopy is a very effective tool to compute topological invariants like homology, and fundamental group \cite{hatcher2002algebraic}. These topological invariants are homotopy invariant, meaning that if two spaces are homotopy equivalent, then their corresponding topological invariants are the same, e.g., $X\sim Y \Rightarrow H_i(X)=H_i(Y)$. We give a very natural homotopy construction to simplify a graph in the following.

For a given filtering function $f:\V\to \R$, let $\wh{\CG}_i$ be the clique complex of $\CG_i$ which induces the sublevel filtration $\wh{\CG}_0\subset \wh{\CG}_1\subset \wh{\CG}_2\subset ...\subset \wh{\CG}_m$. Let $PD_k(\CG,f)$ represent the $k^{th}$ persistence diagram for the sublevel filtration $\{\wh{\CG}_i\}$ as described above.

Now, we define \textit{dominated vertices} in $\CG$. Define the neighborhood of $u_0$ as $N(u_0)=\{u_0\}\cup\{v\in \V\mid e_{u_0v}\in \E\}$. In particular, $N(u_0)\subset \V$ is the set of all vertices adjacent to $u_0$, and $u_0$ itself.

\begin{definition} A vertex $u$ is \textit{dominated by} the vertex $v$ in $\CG$ if $N(u)\subset N(v)$. If there is such a vertex $v$, we call $u$ a \textit{dominated} vertex of $\CG$ (see Figure~\ref{fig:toydomination}). 
\end{definition}

\begin{wrapfigure}{r}{0.28\textwidth} 
	\vspace{-.2cm}
	\begin{center}
		\includegraphics[width=0.24\textwidth]{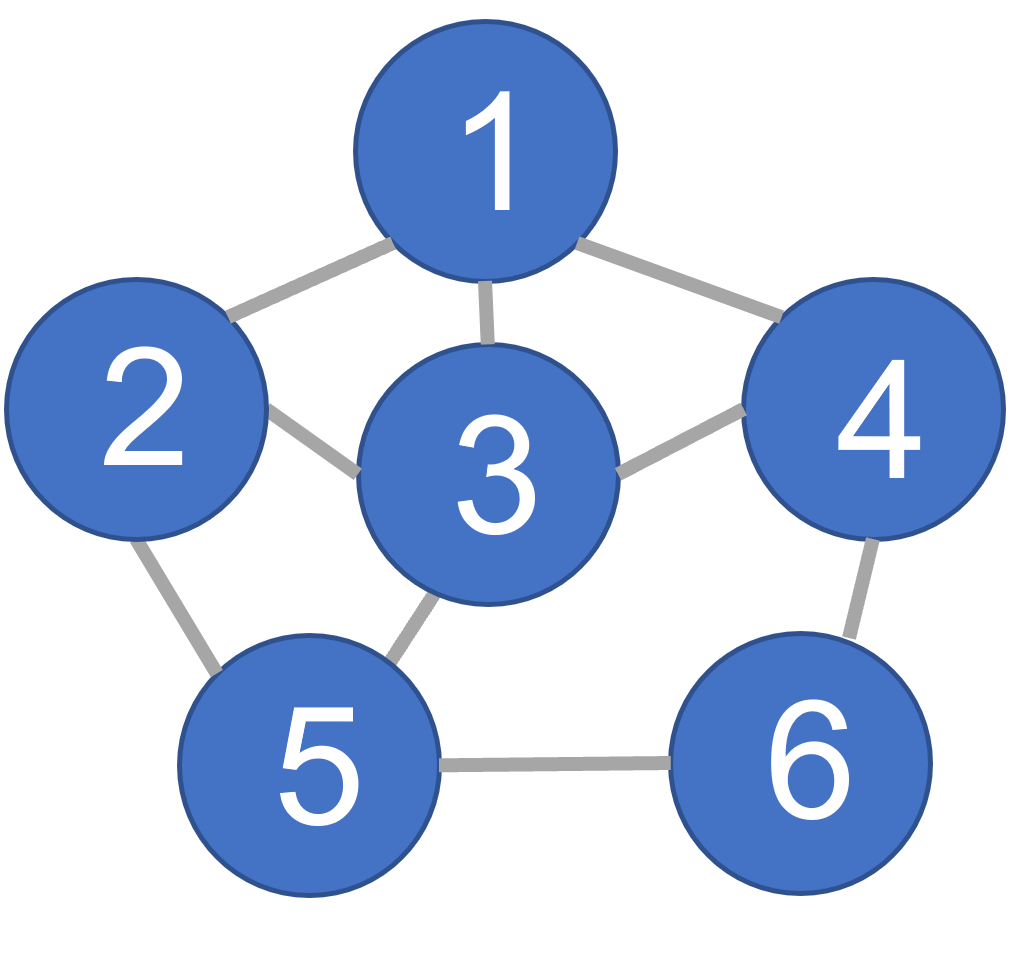}
		\caption{\footnotesize Vertex 3 dominates vertices 1 and 2 because all neighbors of 1 or 3 are neighbors of 3. There are no other dominated vertices.}
		\label{fig:toydomination}
	\end{center}
	\vspace{-.2cm}
\end{wrapfigure}

Removing a vertex $u$ from a graph $\CG$ creates the natural subgraph of $\CG$ obtained by removing the vertex $u$ and all adjacent edges from $\CG$, i.e. $\CG-\{u\}=\CG'=(\V',\E')$ where $\V'=\V-\{u\}$, and $\E'=\E-\{e_{uw}\in \E\}$ for any $w$. 

We can alternatively express these via the star notion. The \textit{star} $\mathbf{St}(u)$ of a vertex $u$ is the union of all simplices which contains $u$. Then, $u$ is dominated by $v$ if $\mathbf{St}(u)\subset \mathbf{St}(v)$. Similarly, removing a vertex $u$ from $\CG$ corresponds to removing $\mathbf{St}(u)$ from the clique complex $\wh{\CG}$, i.e. $\wh{\CG}-\mathbf{St}(u)=\wh{\CG}'$. A useful result is that removing a dominated vertex does not affect the homotopy type of the corresponding clique complexes.

\vspace{.2cm}
\begin{lemma} \label{lem:folding} Let $u$ be a dominated vertex in $\CG$. Let $\CG'=\CG-\{u\}$. Then the clique complexes $\wh{\CG}$ and $\wh{\CG}'$ are homotopy equivalent, i.e. $\wh{\CG}\sim \wh{\CG}'$.
\end{lemma}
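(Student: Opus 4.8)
The plan is to exhibit an explicit deformation retraction of $\wh{\CG}$ onto $\wh{\CG}'$, using the classical \emph{folding} (or \emph{vertex collapse}) move from combinatorial topology. Since $u$ is dominated by some vertex $v$, we have $N(u) \subset N(v)$, and in particular $e_{uv} \in \E$, so $u$ and $v$ span an edge in $\wh{\CG}$. The idea is that every simplex containing $u$ can be ``pushed'' across this edge onto the corresponding simplex containing $v$ instead, because the domination hypothesis guarantees that whenever $\{u, w_1, \dots, w_j\}$ forms a clique in $\CG$, so does $\{v, w_1, \dots, w_j\}$ (each $w_i$ is adjacent to $u$, hence to $v$, and the $w_i$ are pairwise adjacent; also $v$ is adjacent to each $w_i$ and to $u$). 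Thus $\mathbf{St}(u) \subset \mathbf{St}(v)$, and the link of $u$ in $\wh{\CG}$ is a cone with apex $v$, which is the precise combinatorial condition under which the star of $u$ collapses elementarily.

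First I would make the statement $\mathbf{St}(u) \subset \mathbf{St}(v)$ precise and verify it from the definition of dominated vertex applied at the level of cliques: a simplex $\sigma \in \wh{\CG}$ contains $u$ iff its vertex set is a clique containing $u$, i.e.\ is of the form $\{u\} \cup \tau$ with $\tau \subset N(u) \setminus \{u\}$ a clique; replacing $u$ by $v$ keeps it a clique since $N(u) \subset N(v)$ and $v \in N(u)$ forces $v$ adjacent to everything in $\tau$ and to $u$. Second, I would describe the deformation retraction $r \colon |\wh{\CG}| \to |\wh{\CG}'|$ geometrically: it is the simplicial map induced by the vertex map fixing all vertices except $u$ and sending $u \mapsto v$. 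One checks this is a well-defined simplicial map onto $\wh{\CG}'$ (every simplex maps to a genuine simplex of $\wh{\CG}'$, again by the clique argument above), that it restricts to the identity on $\wh{\CG}'$, and that it is homotopic to the identity on $\wh{\CG}$ via the straight-line homotopy $H(x,t) = (1-t)x + t\, r(x)$, which stays inside $|\wh{\CG}|$ because for any point $x$ in a simplex $\sigma$, both $x$ and $r(x)$ lie in a common simplex of $\wh{\CG}$ (namely $\sigma \cup r(\sigma)$, which is again a clique: adding $v$ to the clique underlying $\sigma$ is legal whenever $\sigma$ contained $u$, by domination, and if $\sigma$ did not contain $u$ then $r(\sigma) = \sigma$). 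This common-simplex property is exactly what makes the line segment from $x$ to $r(x)$ lie in the polytope $|\wh{\CG}|$.

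The main obstacle — really the only subtle point — is verifying that the straight-line homotopy never leaves $|\wh{\CG}|$, equivalently that $x$ and $r(x)$ always co-bound a simplex of $\wh{\CG}$. The delicate case is a simplex $\sigma$ whose vertex set is $\{u\} \cup \tau$ with $\tau \neq \emptyset$; here $r(\sigma)$ has vertex set $\{v\} \cup \tau$, and I need $\{u, v\} \cup \tau$ to be a clique of $\CG$, i.e.\ a simplex of $\wh{\CG}$. This holds precisely because $u$ is dominated by $v$: all vertices of $\tau$ lie in $N(u) \subset N(v)$, $\tau$ is itself a clique, and $e_{uv} \in \E$ since $v \in N(u)$. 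Once this is established, $r$ is a deformation retraction, hence $|\wh{\CG}| \simeq |\wh{\CG}'|$ are homotopy equivalent, which is the claim. I would also remark that this is the standard ``folding a dominated vertex'' operation familiar from discrete Morse theory and Whitehead's simple-homotopy theory, so the argument can alternatively be phrased as: $u$ is a free face-type collapse, $\mathbf{St}(u)$ elementarily collapses onto $\mathbf{St}(u) \cap \wh{\CG}'$, and collapses preserve homotopy type.
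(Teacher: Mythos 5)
Your proof is correct and follows essentially the same route as the paper: the paper's argument also folds $u$ onto its dominating vertex $v$ via a deformation retraction along the edge $e_{uv}$ defined in simplicial coordinates, citing the clique-complex folding/strong-collapse literature for the details you spell out. Your explicit verification that $\{u,v\}\cup\tau$ is a clique whenever $\{u\}\cup\tau$ is (so the straight-line homotopy stays in $|\wh{\CG}|$) is precisely the standard detail the paper delegates to those references.
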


\begin{proof} Notice that $\CG'$ is a subgraph of $\CG$, and hence $\wh{\CG'}$ is a subcomplex in $\wh{\CG}$. 
	Let $u$ be dominated by $v$ in $\CG$. Then, we can write a deformation retract from $\wh{\CG}$ to $\wh{\CG}'$ by pushing the edge $e_{uv}$ starting from $u$ toward $v$. In other words, by using the simplicial coordinates, one can define a homotopy $F:\wh{\CG}\times I\to \wh{\CG}$ which is identity on $\wh{\CG}'$ and pushing all the faces in $\wh{\CG}-\wh{\CG}'$ to the corresponding faces in $\wh{\CG}'$. This gives a homotopy equivalence $\wh{\CG}\sim \wh{\CG}'$. To visualize, in Figure~\ref{fig:toydomination}, one can push vertex $1$ in the clique complex $\wh{\CG}$ towards vertex $3$ along the edge between them. After the push, the $2$-simplices [1,2,3] and [1,3,4] are pushed to the edges [2,3] and [3,4] respectively.  See \cite{adamaszek2013clique, boissonnat2018computing} and \cite[Lemma 2.2]{boulet2010simplicial} for details.
\end{proof}

\begin{remark} \label{remark:collapsing} \normalfont [Collapsing] Note that this collapsing operation is adaptation of a well-known notion called \textit{deformation retract} in algebraic topology in a simplicial complex setting \cite{hatcher2002algebraic}. This operation keeps the homotopy type the same, and hence the homology does not change with this reduction. In \cite{adamaszek2013clique, boissonnat2018computing, boulet2010simplicial}, this is called \textit{folding} ($\CG$ folds onto $\CG-\{u\}$) or a \textit{strong collapse}. In these papers, the algorithm reduces simplicial complexes in the filtration one by one so that its associated clique complex keeps the same homotopy type. Our contribution here is to adapt this operation to the graph filtrations and define a smaller subgraph before the filtration step so that the induced simplicial complexes are homotopy equivalent. Since we prune the graph at the beginning of the process, our algorithm significantly reduces the computational costs for the induced persistence diagrams.
\end{remark}

In the following, we introduce the \textit{PrunIT Algorithm} by showing that removing a dominated vertex does not change the persistence diagrams of the graph. We give the proof in Appendix \ref{sec:proofs}.

\begin{theorem} \label{thm:reduction} Let $\CG=(\V,\E)$ be an unweighted graph, and $f:\V\to\R$ be a filtering function. Let $u\in\V$ be dominated by $v\in \V$ and $f(u)\geq f(v)$. Then, removing $u$ from $\CG$ does not change the persistence diagrams for sublevel filtration, i.e. for any $k\geq 0$ $$PD_k(\CG,f)=PD_k(\CG-\{u\},f).$$
\end{theorem}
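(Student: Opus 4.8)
The plan is to leverage Lemma~\ref{lem:folding} at every stage of the filtration simultaneously, and then invoke a persistence analogue of the homotopy-invariance of homology. First I would set $\CG'=\CG-\{u\}$ and, for each threshold $\alpha_i\in\I$, compare $\wh{\CG}_i$ with $\wh{\CG}'_i$. The key observation is the hypothesis $f(u)\ge f(v)$: it guarantees that whenever $u$ has entered the sublevel set (i.e.\ $f(u)\le\alpha_i$), the dominating vertex $v$ has already entered as well (since $f(v)\le f(u)\le\alpha_i$), and moreover every neighbor of $u$ lying in $\V_i$ is a neighbor of $v$ lying in $\V_i$ — because $N(u)\subset N(v)$ and the induced subgraph only restricts, never creates, adjacencies. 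Hence $u$ is still dominated by $v$ inside each $\CG_i$ that contains $u$. For the stages with $f(u)>\alpha_i$, the vertex $u$ is simply absent and $\wh{\CG}_i=\wh{\CG}'_i$ on the nose.

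Next I would assemble these stagewise facts into a morphism of filtrations. There is an obvious inclusion of filtered complexes $\{\wh{\CG}'_i\}\hookrightarrow\{\wh{\CG}_i\}$ (each $\wh{\CG}'_i$ is a subcomplex of $\wh{\CG}_i$, compatibly with the bonding inclusions). By the two paragraphs above, on each $\wh{\CG}_i$ this inclusion is either the identity (when $u\notin\V_i$) or the strong-collapse / folding retraction of Lemma~\ref{lem:folding} (when $u\in\V_i$). I would then check that the deformation retractions $F_i:\wh{\CG}_i\times I\to\wh{\CG}_i$ onto $\wh{\CG}'_i$ are \emph{compatible with inclusions}: the retraction ``push $u$ along $e_{uv}$ toward $v$'' is defined by the same simplicial rule at every stage, so $F_j$ restricts to $F_i$ for $i\le j$. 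Consequently the inclusion of filtered complexes is a filtered homotopy equivalence, and passing to homology it induces isomorphisms $H_k(\wh{\CG}'_i)\xrightarrow{\ \cong\ }H_k(\wh{\CG}_i)$ that commute with all the maps $H_k(\wh{\CG}_i)\to H_k(\wh{\CG}_j)$ induced by inclusion. Two persistence modules connected by such a natural isomorphism are isomorphic, and isomorphic persistence modules have identical persistence diagrams by the structure theorem; therefore $PD_k(\CG,f)=PD_k(\CG',f)$ for every $k\ge0$.

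An alternative, more hands-on route avoids persistence-module language: one shows directly that for every pair $i\le j$ the ranks of the induced maps $H_k(\wh{\CG}_i)\to H_k(\wh{\CG}_j)$ and $H_k(\wh{\CG}'_i)\to H_k(\wh{\CG}'_j)$ agree, using the commuting square built from the retractions, and then appeals to the fact that a persistence diagram is determined by these ``persistent Betti numbers'' $\beta_k^{i,j}$ via inclusion--exclusion. I would probably present the argument this way if I wanted to keep prerequisites minimal, citing the standard formula $PD_k$ $\leftrightarrow$ $\{\beta_k^{i,j}\}$ from \cite{edelsbrunner2010computational}.

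The main obstacle, and the step I would spend the most care on, is the \emph{compatibility of the homotopies across filtration stages}: Lemma~\ref{lem:folding} is stated as a single homotopy equivalence $\wh{\CG}\sim\wh{\CG}'$, but to conclude equality of diagrams I need the whole ladder of retractions to commute with the filtration's bonding maps, i.e.\ I need the homotopy equivalence to be a \emph{filtered} one, not just a levelwise one. This is true because the folding map is induced by a fixed vertex map $\wh{\CG}\to\wh{\CG}'$ (send $u\mapsto v$, fix everything else) whose restriction to each subcomplex $\wh{\CG}_i$ is exactly the folding of $\wh{\CG}_i$; spelling out that this vertex map is simplicial on every $\wh{\CG}_i$ (which is where $N(u)\cap\V_i\subset N(v)\cap\V_i$ and $f(u)\ge f(v)$ get used) and that it is a homotopy inverse to the inclusion compatibly in $i$ is the technical heart of the proof. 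Everything else is bookkeeping.
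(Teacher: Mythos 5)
Your proposal is correct and follows essentially the same route as the paper: use $f(u)\ge f(v)$ to see that $u$ is dominated by $v$ in every $\CG_i$ that contains $u$ (and is simply absent otherwise), apply Lemma~\ref{lem:folding} levelwise, and conclude that the inclusion-induced maps $H_k(\wh{\CG}'_i)\to H_k(\wh{\CG}_i)$ are isomorphisms, hence the persistence modules and therefore the diagrams coincide. The only divergence is that the ``filtered compatibility of the retractions'' you single out as the technical heart is not actually needed: the comparison morphism of persistence modules is the one induced by the inclusions, which commutes with the bonding maps automatically, so levelwise homotopy equivalence from Lemma~\ref{lem:folding} already suffices --- which is exactly how the paper argues.
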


\noindent {\em Outline of the proof:} The main idea is to employ the collapsing idea in the simplicial complexes of the filtration $\wh{\CG}_0\subset \wh{\CG}_1\subset \wh{\CG}_2\subset \dots\subset \wh{\CG}_m$  in a suitable way. In particular, the lemma above shows that if a vertex $u$ is dominated by a vertex $v$ in $\wh{\CG}_i$, then removing $\mathbf{St}(u)$ from $\wh{\CG}_i$ does not change the homotopy type. Hence, if we ensure that when $u$ first appears in the filtration $\{\wh{\CG}_i\}$, the dominated vertex $v$ is already there, then $u$ can be removed from all the simplicial complexes in the filtration; removing $u$ from the original graph before building the simplicial complexes does not affect the homotopy type of complexes in the filtration. The condition $f(u)\geq f(v)$ makes sure that whenever $u$ exists in $\{\wh{\CG}_i\}$, the dominant vertex is already there, and $u$ can be removed from all simplicial complexes, and hence from the graph $\CG$. We give the proof of the theorem in Appendix \ref{sec:proofs}.

Notice that the primary condition to remove dominated vertices from the graph ensures that the dominated vertex enters the filtration after its dominating counterpart. With the PrunIT Algorithm, we show that removing the dominated vertex does not change the homotopy type of the simplicial complexes in the filtration. As homotopy equivalence implies the equivalences of homology groups at all levels, the reduction with this algorithm works in all dimensions. Furthermore, while coral reduction works above the corresponding dimension $(j>k)$, the PrunIT algorithm works in any dimension.

\begin{remark} \label{remark:superlevel} \normalfont [Superlevel Filtration] \normalfont The same proof applies to the superlevel filtration by changing the condition $f(u)\geq f(v)$ to $f(u)\leq f(v)$ in the theorem. In particular, if $PD_k^\mathrm{v}(\CG,f)$ represents the $k^{th}$ PD for superlevel filtration, then with the condition $f(u)\leq f(v)$, we would have $PD_k^\mathrm{v}(\CG,f)=PD_k^\mathrm{v}(\CG-\{u\},f)$ for any $k\geq 0$.
	Notice that if one takes $f$ to be the degree function and uses the superlevel filtration, then the theorem automatically holds for any dominated vertex as $deg(u)\leq deg(v)$ when $u$ is dominated by $v$.  
\end{remark}

\begin{remark} \label{remark:detecting_dominated} \normalfont [Detecting Dominating Vertices]
	The dominating vertices can be computed by using the following approach (Algorithm is given in appendix Section~\ref{sec:algorithms}). Let $\A=(a_{ij})$ be the adjacency matrix for a graph $\CG$. Given $v_{i_0}\in\V$, consider all $j$'s with $a_{i_0j}=1$. Check if $v_{i_0}$ is dominated by $v_j$ by comparing the rows $R_{i_0}$ and $R_j$, i.e. for any $k\neq j$ with $a_{i_0k}=1$, check whether $a_{j_k}=1$. If this holds, $v_j$ dominates $v_{i_0}$. Removing $i_0^{th}$ row $R_{i_0}$ and $i_0^{th}$ column $C_{i_0}$ from $\A$ corresponds to removing $v_{i_0}$ from $\CG$. Essentially, vertex $v_{i_0}$ is compared to each neighbor $v_j$ by checking whether $v_{i_0}$ is already a neighbor of each of $v_j$'s neighbors. These checks require iterating over each vertex, searching vertex neighbors in the graph and getting the neighbors of each neighbor. The computational complexity is therefore $\mathcal{O}(|\V|\times d^2)$ where $d$ is the average degree in the graph. 
\end{remark}

While our main focus is the most common method, sublevel/superlevel filtration, in the application of PH in graph setting, our PrunIt algorithm works perfectly well with another common method, power filtration~\cite{aktas2019persistence}, as well, i.e. removing a dominated vertex does not change persistence diagrams.
\begin{theorem} \label{thm:Prunit_power} [PrunIt for Power Filtration] Let $\CG=(\V,\E)$ be an unweighted connected graph. Let $\wh{PD}_k(\CG)$ represent $k^{th}$ persistence diagram of $\CG$ with power filtration. Let $u\in\V$ be dominated by any other vertex in $\V$. Then, for any $k\geq 1$, $$\wh{PD}_k(\CG)=\wh{PD}_k(\CG-\{u\}).$$
\end{theorem}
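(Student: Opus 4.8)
The plan is to mirror the argument behind \Cref{thm:reduction}, but replace the sublevel filtration by the power filtration and replace \Cref{lem:folding} by its power-filtration analogue. Recall that for a connected graph $\CG$ the power filtration is built from the hop-distance $d_\CG$: at step $i$ one forms the graph $\CG^{[i]}$ on the same vertex set $\V$ with an edge $e_{rs}$ whenever $d_\CG(v_r,v_s)\le i$, and then takes its clique complex, giving a nested sequence $\wh{\CG}^{[1]}\subset\wh{\CG}^{[2]}\subset\dots\subset\wh{\CG}^{[D]}$ where $D=\mathrm{diam}(\CG)$. The key observation is that the \emph{domination relation survives every stage of this filtration}: if $u$ is dominated by $v$ in $\CG$, i.e. $N_\CG(u)\subset N_\CG(v)$, then for every $i\ge 1$ we still have $N_{\CG^{[i]}}(u)\subset N_{\CG^{[i]}}(v)$. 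Indeed, if $d_\CG(u,w)\le i$ then a shortest $u$--$w$ path begins with an edge from $u$ to some neighbor $u'\in N_\CG(u)\subset N_\CG(v)$, and if $u'\ne w$ then $v$ is adjacent to $u'$ so $d_\CG(v,w)\le 1+d_\CG(u',w)\le 1+(i-1)=i$; the boundary cases $w=v$ and $w=u'$ are immediate since $v\in N_\CG(v)$. Hence $u$ is dominated by $v$ in $\CG^{[i]}$ for all $i$.

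Given this, the argument runs as follows. First I would check that the power filtration of $\CG-\{u\}$ is exactly the family of induced subcomplexes $\wh{(\CG^{[i]})}\cap (\text{simplices avoiding }u)$; this needs the fact that deleting a dominated vertex does not change pairwise hop-distances among the remaining vertices (any shortest path through $u$ can be rerouted through $v$, as in the displayed inequality above), so that $(\CG-\{u\})^{[i]}=\CG^{[i]}-\{u\}$ for every $i$. Second, by the domination-preservation claim and \Cref{lem:folding}, for each $i$ the inclusion $\wh{\CG^{[i]}-\{u\}}\hookrightarrow\wh{\CG^{[i]}}$ is a homotopy equivalence — in fact a strong collapse realized by the same simplicial deformation retract pushing $u$ along the edge $e_{uv}$ toward $v$. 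Third, since $u$ is present in \emph{every} complex of the filtration (power filtration starts with $\wh{\CG}^{[1]}$, which already contains all vertices), these deformation retracts are compatible with the inclusions $\wh{\CG^{[i]}}\hookrightarrow\wh{\CG^{[i+1]}}$: the retraction on stage $i+1$ restricts to the retraction on stage $i$ because the pushing map is defined by the same local rule and $\wh{\CG^{[i]}-\{u\}}$ sits inside $\wh{\CG^{[i+1]}-\{u\}}$. Therefore we obtain a morphism of persistence modules $H_k(\wh{(\CG-\{u\})^{[\bullet]}})\to H_k(\wh{\CG^{[\bullet]}})$ that is an isomorphism at every filtration value and commutes with all the structure maps, which yields $\wh{PD}_k(\CG-\{u\})=\wh{PD}_k(\CG)$ by the standard uniqueness of persistence-diagram decompositions.

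The main obstacle I anticipate is the third step: verifying that the stagewise homotopy equivalences assemble into a genuine morphism of filtered complexes, rather than just an equivalence at each level. One has to make sure the deformation retract $r_i:\wh{\CG^{[i]}}\to\wh{\CG^{[i]}-\{u\}}$ is chosen canonically (e.g. the simplicial map sending $u\mapsto v$ and fixing all other vertices, extended affinely, which is exactly the folding map of \cite{boulet2010simplicial}) so that $r_{i+1}|_{\wh{\CG^{[i]}}}=r_i$ holds on the nose. This canonical choice is available precisely because domination is preserved at every stage with the \emph{same} dominating vertex $v$, so there is no ambiguity in which retract to use — that is why the $N_\CG(u)\subset N_\CG(v)$ hypothesis (as opposed to a hypothesis that merely holds at some stages) is what makes the power-filtration statement work, and why no analogue of the order condition $f(u)\ge f(v)$ from \Cref{thm:reduction} is needed here. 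A secondary, more routine point to be careful about is the edge case where deleting $u$ changes the diameter $D$: if $\mathrm{diam}(\CG-\{u\})<\mathrm{diam}(\CG)$ then the shorter filtration is simply the long one truncated, so the persistence diagram (which records no features born or dying past the point where the complex stabilizes to the full simplex) is unaffected, and the stated equality still holds for all $k\ge1$.
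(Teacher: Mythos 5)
Your proposal is correct and follows essentially the same route as the paper's proof: establish $d(z,v)\leq d(z,u)$ to show that $v$ still dominates $u$ in every graph power and that deleting $u$ leaves all remaining pairwise distances (hence $(\CG-\{u\})$'s power filtration) unchanged, then apply \Cref{lem:folding} at each stage and conclude for $k\geq 1$. Your worry about assembling the stagewise retractions is resolved even more simply than you suggest: since all the relevant maps are honest inclusions of subcomplexes, the squares commute automatically and the stagewise homology isomorphisms give the persistence-module isomorphism directly.
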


The proof of this result is given in appendix Section~\ref{sec:proof_Prunit}.

\subsection*{Combining the CoralTDA and PrunIT Algorithms}

Even though both algorithms are quite effective by themselves, we significantly reduce the computational costs (\Cref{fig:combinedresults}) by combining them as follows. For a given graph $\CG=(\V,\E)$ and filtering function $f:\V\to \R$, one can start by trimming all dominated vertices with respect to $f$, and get a smaller graph $\CG'$. We have already proven that $PD_k(\CG)=PD_k(\CG')$. Then, one can take the $k$-core of this smaller graph $\CG'$ to compute higher persistence diagrams of the original graph $\CG$ as before. In particular, by applying both reduction algorithms, for any $k\geq 0$, we obtain $$PD_k(\CG)=PD_k(\CG')=PD_k((\CG')^{k+1}).$$

\section{Experiments and Discussion}
\label{sec:experiments} 

We apply our new approaches to three types of datasets. The details of datasets are provided in Table~\ref{tab:prunitresults} and appendix Table~\ref{tab:dataset}. 

\noindent {\em Graph classification datasets} consists of biological kernel~\cite{KKMMN2016} and ego networks from \texttt{TWITTER} and \texttt{FACEBOOK}~\cite{mcauley2012learning}. 
\noindent {\em Node classification datasets} includes \texttt{CITESEER} and \texttt{CORA}~\cite{nr} and Open Graph Benchmark citation (paper cites paper) \texttt{OGB-ARXIV} and \texttt{OGB-MAG}~\cite{hu2020open} networks. \noindent {\em Large networks dataset} contains 11 large networks of 100K-1M vertices from the Stanford  Repository~\cite{snapnets}. 

We used an AMD Ryzen 5 2100 MHZ 4 core computer in our R, Python and Java experiments.

We evaluate both algorithms by comparing vertex and edge sets and the total run time for the reduced graph with respect to the original graph. In the rest of this manuscript, we compute the vertex set reduction as $100\times( \left |\V\right |-\left |\V^\prime\right |) / \left |\V\right |$ where $\V^\prime$ is the vertex count in the reduced graph. Edge and time reductions are computed similarly.

\subsection{Reduction on Graph Classification Datasets}
In this task, our goal is to evaluate the reduction of computational costs when we use the CoralTDA and PrunIT algorithms on datasets chosen from different graph classification tasks. We used one of the most commonly used functions in these experiments, the degree function with sublevel filtration.

\begin{figure*}[!ht]
	\centering
	\begin{subfigure}{.24\textwidth}
		\centering 
		\includegraphics[width=.95\linewidth]{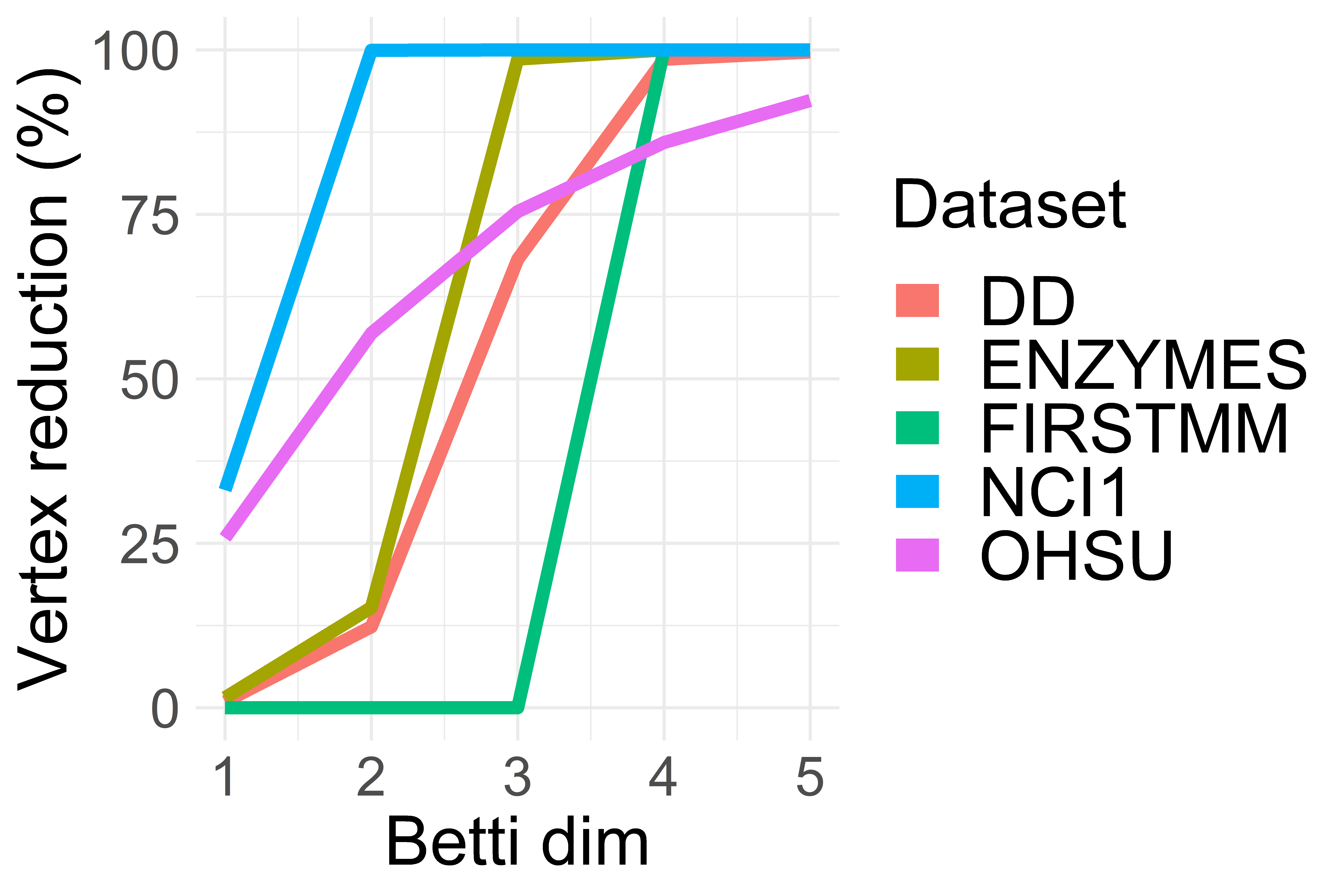}  
	\end{subfigure}
	\begin{subfigure}{.24\textwidth}
		\centering 
		\includegraphics[width=.95\linewidth]{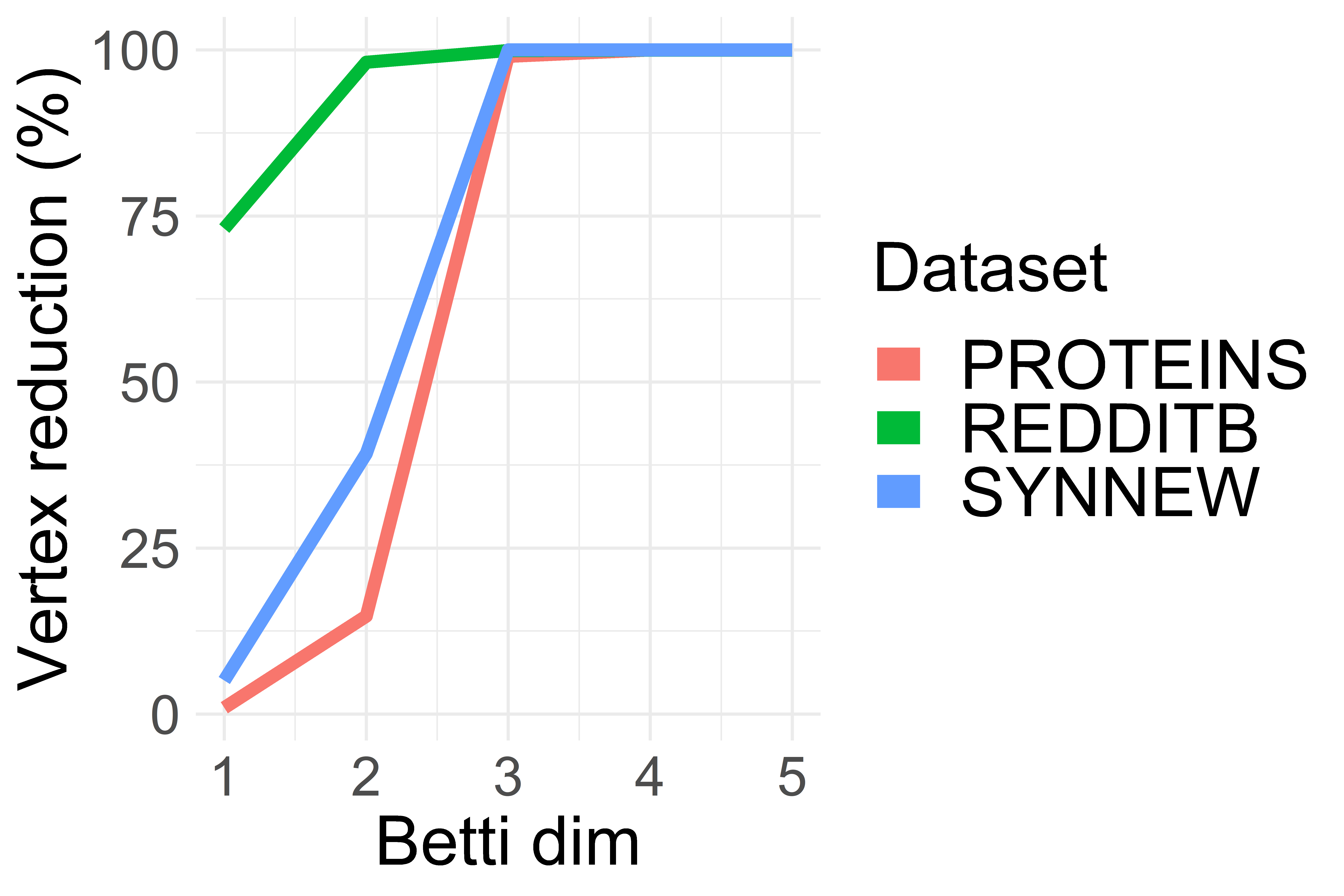}  
	\end{subfigure}
	\begin{subfigure}{.24\textwidth}
		\centering
		\includegraphics[width=.95\linewidth]{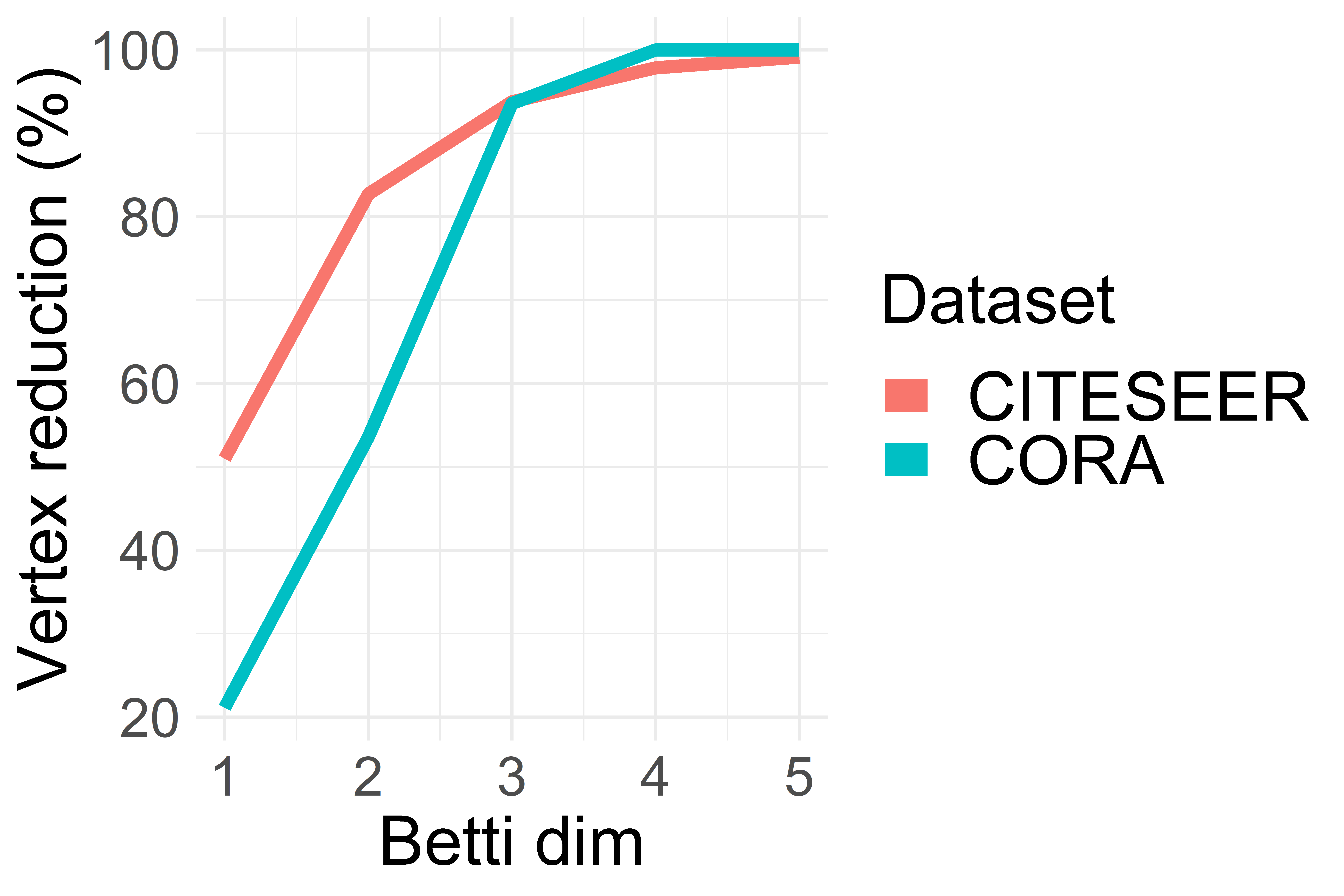}  
	\end{subfigure}
	\begin{subfigure}{.24\textwidth}
		\centering
		\includegraphics[width=.95\linewidth]{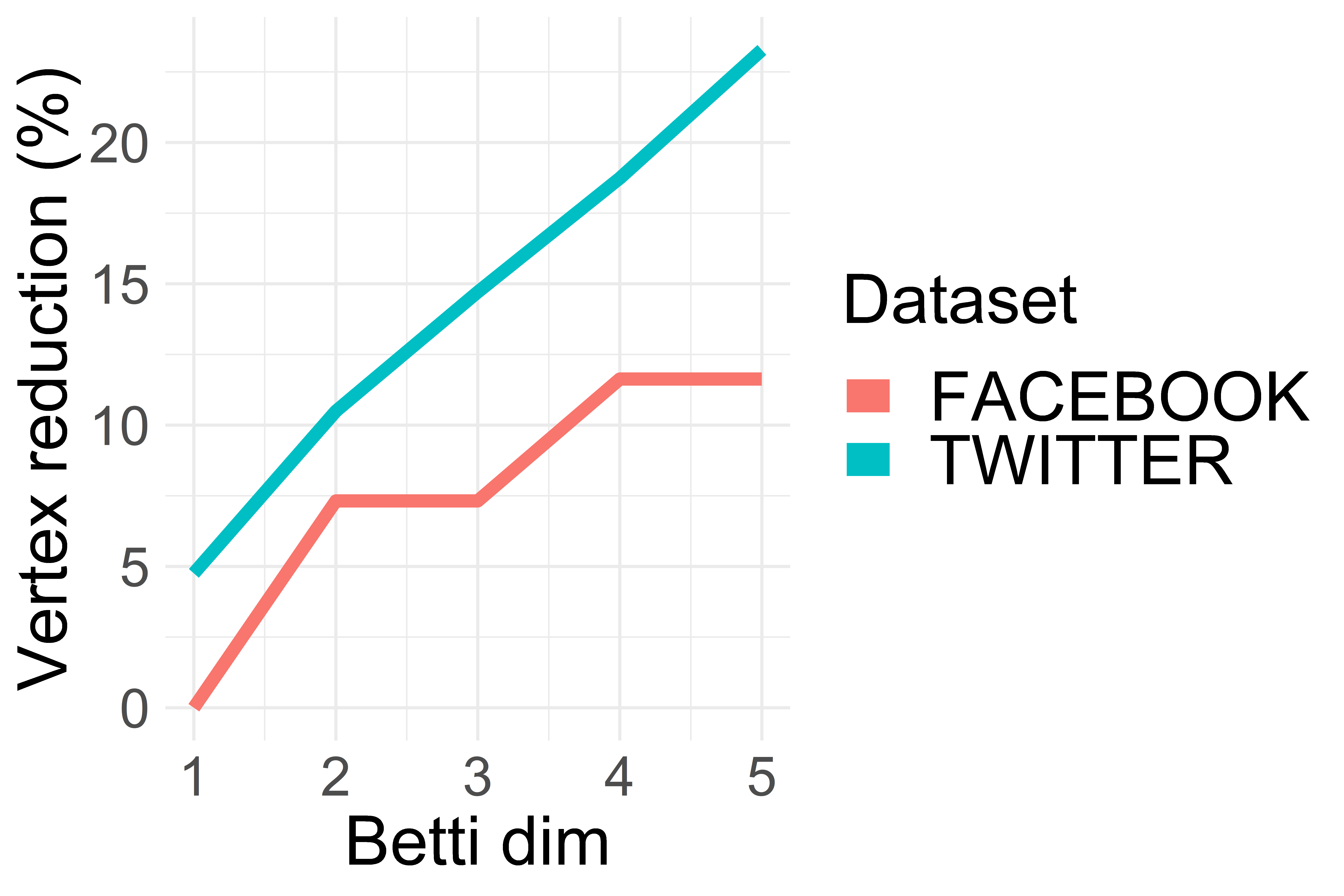}  
	\end{subfigure}
	\caption{\footnotesize CoralTDA vertex reduction in graph and node classification datasets (higher is better). Reduction values are averages from graph instances of the datasets (\texttt{CORA} and \texttt{CITESEER} node classification datasets contain a single graph instance only). \texttt{FACEBOOK} and \texttt{TWITTER} datasets are reduced by 20\% for $k>4$, whereas in other datasets graphs are reduced to empty sets.} 
	\label{fig:vertex}
\end{figure*}

In Figure~\ref{fig:vertex}, we show the vertex reduction when using CoralTDA for computations of $PD_k(\CG)$ for dimensions (Betti) $k=1$ to $k=5$. At dimension $k=4$ and $k=5$, CoralTDA reduces 10 datasets by 100\%, i.e., these  datasets have trivial $PD_k(\CG)$ for $k\geq 4$. Even at smaller dimensions, CoralTDA can reduce the vertex set by 25\%-75\%.

Figure~\ref{fig:reductiondomination} shows reduction percentages by the PrunIT algorithm. 
\texttt{FIRSTMM} and \texttt{SYNNEW} datasets are reduced by less than 10\%; however the other 11 datasets are reduced by at least 35\%.  The lower reduction on \texttt{FIRSTMM} and  \texttt{SYNNEW} are due to stronger cores on the networks. \texttt{SYNNEW} is synthetically created, but \texttt{FIRSTMM} is created from 3d point cloud data and categories of various household objects. We believe that the physical proximity of similar objects (e.g., chairs are close to each other) in a household creates a denser community structure in the \texttt{FIRSTMM} dataset, which in turn results in strong cores. 

We further report reductions in computational time (Figure~\ref{fig:time}), edge set (Figure~\ref{fig:edge}), and simplex count (Figure~\ref{fig:complex}) in the Appendix. 

\subsection{Reduction on Node Classification Datasets}
In this task, our goal is to compute the reduction of computational costs by using CoralTDA and PrunIT algorithms on datasets chosen from node classification tasks. The CoralTDA results are computed over \texttt{CITESEER} and \texttt{CORA} networks and shown in \Cref{fig:vertex} with more than 20\% reduction for the first and higher dimensional persistence.

\begin{figure*}[t]
	\centering
	\begin{subfigure}{.49\textwidth}
		\centering 
		\includegraphics[height=.65\linewidth]{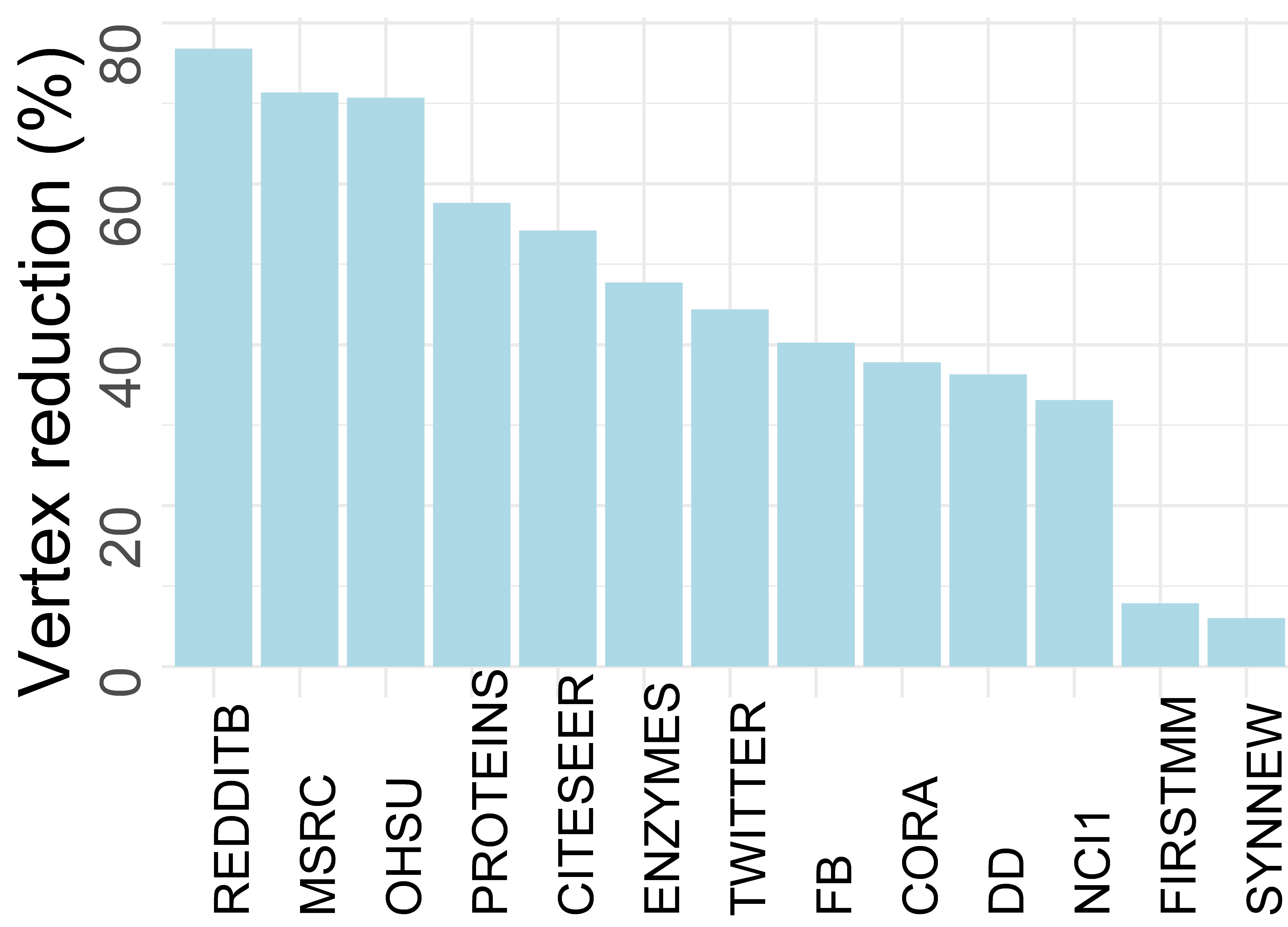} 
		\caption{\footnotesize Vertex reduction by PrunIT algorithm in the superlevel filtration. Results are averages of graph instances from the datasets. }
		\label{fig:reductiondomination}
	\end{subfigure}
	~
	\begin{subfigure}{.49\textwidth}
		\centering 
		\includegraphics[height=.65\linewidth]{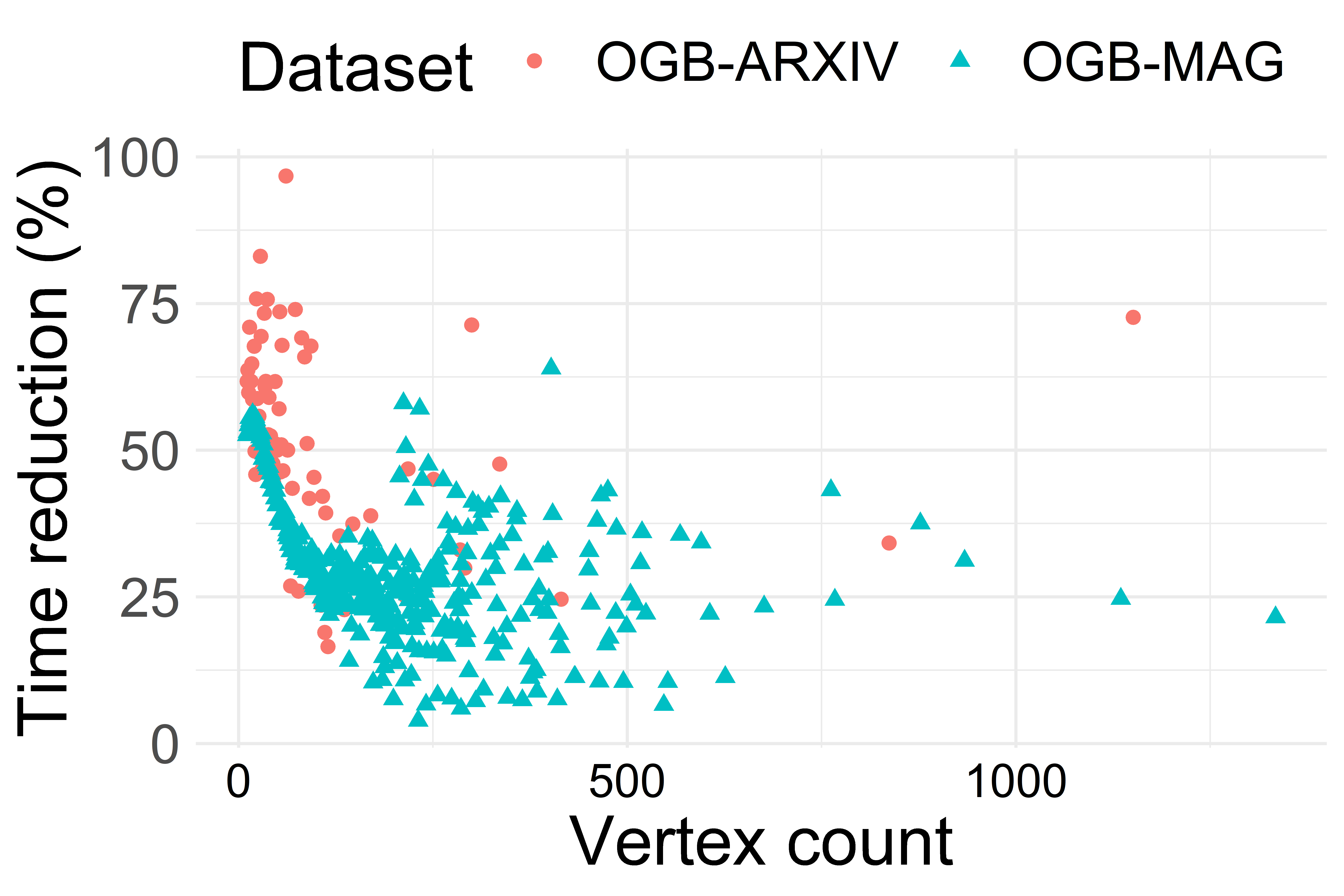} 
		\caption{\footnotesize PrunIT reduction in OGB node classification dataset. Each data point is an ego network. Even for large networks, time reduction rates can reach 75\%.}
		\label{fig:dominatedOGB}
	\end{subfigure}
	\caption{\footnotesize PrunIt vertex and time reduction in graph datasets.} 
	\label{fig:new1}
\end{figure*}

In node classification, we can also analyze the k-hop ($k\ge 1$) neighborhood of a vertex with topological features (such as Betti-0) and use the computed persistence diagram to classify the vertex. Such an approach has yielded SOTA results with significant improvement in accuracy by using 0-dimensional persistence~\cite{chen2021topological}. However, the computational costs of persistent homology are non-negligible in large graphs, even for 0-dimensional features. For example, in Open Graph Benchmark datasets~\cite{hu2020open}, one must compute persistence diagrams for each vertex in 100k to 111M vertex graphs.

We apply the PrunIT algorithm to two graphs, Arxiv and MAG, from the Open Graph Benchmark to compute time reduction in persistence diagram computations. We follow the approach in~\cite{chen2021topological} and extract the 1-hop neighborhood of each ego vertex.  
We use the degree function as filtering function as before. In \Cref{fig:dominatedOGB}, we show the reduction in computational time for 0-dimensional persistence. We compute the time costs of PrunIT by considering all the algorithm steps: finding and removing the dominated vertices, creating an induced graph with the vertices, and running 0-dimensional persistent homology on the graph by using vertex degrees as the filtering function. As \Cref{fig:dominatedOGB} shows, we see more than 25\% reduction in computation time in most graphs. Specifically, on average,  computation times of 0-dimensional persistence on \texttt{OGB-ARXIV} networks are reduced by 37\%, and those of \texttt{OGB-MAG} networks are reduced by 23\%. The results show that we can mitigate the computational costs of persistence homology by using the PrunIT algorithms.

\begin{table}[h]
	\centering
	\scriptsize
	\caption{\footnotesize PrunIt reductions in the number of vertices and edges. }
	\label{tab:prunitresults}
	\begin{tabular}{l r c r c}
		\toprule
		Dataset&$\|V\|$&$\|V\|$ Reduction $(\uparrow)$&$\|E\|$ & $\|E\|$ Reduction $(\uparrow)$\\
		\midrule
		com-youtube&1134890&59\%&2987624&25\%\\
		com-amazon&334863&37\%&925872&40\%\\
		com-dblp&317080&72\%&1049866&65\%\\
		web-Stanford&281903&67\%&1992636&76\%\\
		emailEuAll&265214&95\%&364481&94\%\\
		soc-Epinions1&75879&57\%&405740&14\%\\
		p2pGnutella31&62586&46\%&147892&20\%\\
		Brightkite\_edges&58228&48\%&214078&21\%\\
		Email-Enron&36692&76\%&183831&38\%\\
		CA-CondMat&23133&69\%&93439&65\%\\
		oregon1\_010526&11174&62\%&23409&48\%\\
		\bottomrule
	\end{tabular}
\end{table}

\subsection{Reduction on Large Networks}
\label{sec:combined}

Our goal is to combine PrunIt and CoralTDA algorithms to achieve the maximum vertex and edge reduction in large networks in these experiments.

Table~\ref{tab:prunitresults} shows that on the biggest network of \texttt{com-youtube}, we eliminate 59\% of the vertices when we only apply the PrunIt  (on average 62\% in all datasets). The reduction is as high as 95\% (in \texttt{emailEuAll}). Similarly, PrunIt creates significant edge reduction; 40\% of all edges are removed on average. \Cref{fig:combinedresults} shows the reduction when we apply both CoralTDA and PrunIt on large networks. Even for low cores of 2 and 3, the combined algorithms reach a vertex reduction rate of 78\%. These results show that our algorithms can effectively reduce large networks to more manageable sizes.

\begin{figure}[h]
	\centering
	\includegraphics[width=.7\linewidth]{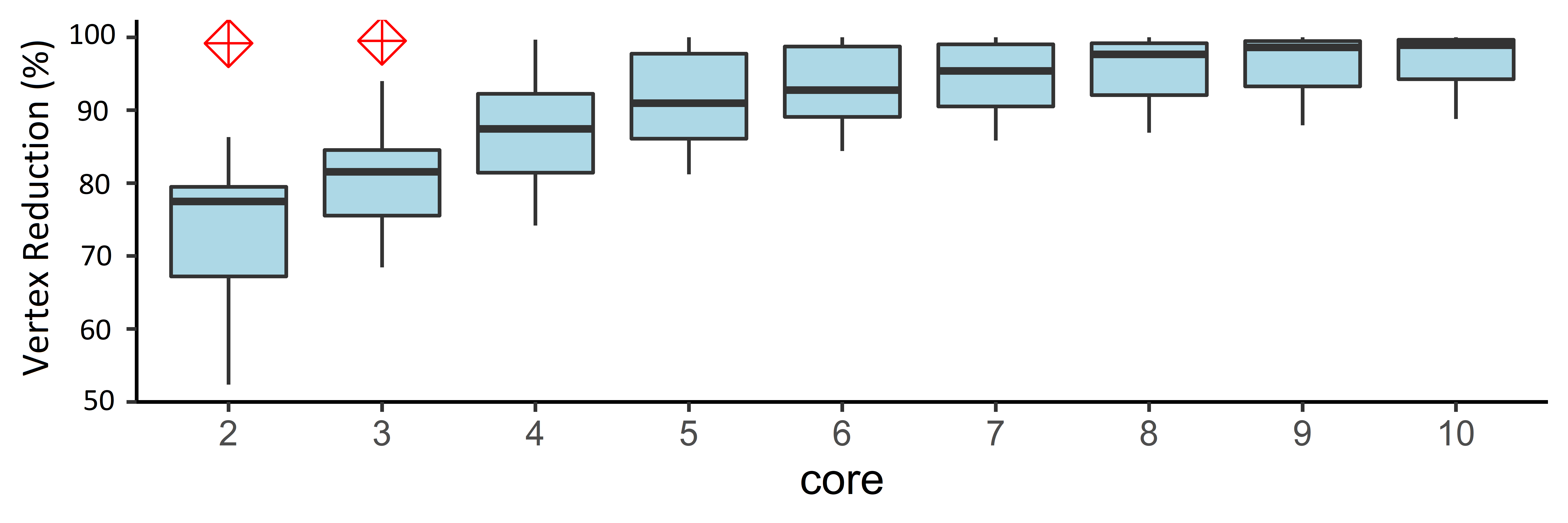}
	\caption{\footnotesize Vertex reduction results for 11 large datasets after the application of PrunIt and CoralTDA algorithms. \texttt{emailEuAll} is the outlier for the 2nd and 3rd cores (shown with a crossed square).}
	\label{fig:combinedresults}
\end{figure}

\section{Conclusion} 
\label{sec:conclusion}
We have proposed two new highly effective algorithms to significantly reduce the computational costs of TDA methods on graphs. While coral reduction is very effective for higher persistence diagrams, PrunIt is highly efficient, in general. Our experiments have showed that even for lower dimensional topological features, such as $k=1$, for some datasets our methods can reduce graph order by up to 95\%, which alleviates  computational costs substantially. Furthermore, in most graph datasets we reduce graph sizes by 100\% for 3rd or higher dimensions. Our methods provides a novel solution for efficient application of the powerful TDA methods on large networks and build a bridge between the graph theory and TDA,
opening a pathway for broader applicability of topological graph learning in practice. 
  
 \section{Acknowledgments}

This material is based upon work sponsored by the Canadian NSERC Discovery Grant {RGPIN-2020-05665}, NSF of USA under award number ECCS 2039701,  
OAC-1828467,  DMS-1925346, CNS-2029661, OAC-2115094, ARO award W911NF-17-1-0356, and Simons Collaboration Grant \# 579977. 
Part of this material is also based upon work supported by (while serving at) the National Science Foundation.
Any opinions, findings, and
conclusions or recommendations expressed in this material are
those of the author(s) and do not necessarily reflect the views
of the National Science Foundation.

\bibliographystyle{plain}
\bibliography{peacecorps.bib}

\clearpage
\appendix

\noindent{\huge \textbf{Appendix}}

\noindent{\large \textbf{for Reduction Algorithms for Persistence Diagrams of Networks: CoralTDA and PrunIT}}

This appendix gives the proofs of our theorems, list the pseudocode of CoralTDA and Prunit and show further reduction results in networks. 

\section{Datasets}

The following table lists characteristics of graphs in our datasets.

\begin{table}[hb]
	\centering
	\small
	\caption{Characteristics of the datasets in graph and node classification experiments.}
	\label{tab:dataset}
	\begin{tabular}{l r r r}
		\toprule
		Dataset & NumGraphs& AvgNumNodes & AvgNumEdges\\
		\midrule
		DD&1178&284.32&715.66\\
		DHFR &467&42.43&44.54\\
		ENZYMES & 600  & 32.6 & 62.14\\
		FIRSTMM&41&1377.27&3074.10\\
		NCI1 & 4110  & 29.87 & 32.30\\
		OHSU &79&82.01&199.66\\
		PROTEINS & 1113   & 39.06 & 72.82\\
		REDDIT-BINARY & 2000   & 429.63 & 497.75\\
		SYNNEW&300&100.0&196.25\\
		\midrule
		TWITTER &973&83.5&1817\\
		FACEBOOK&10&403.9&8823.4\\
		\midrule
		CORA & 1 & 2708 & 5429\\
		CITESEER & 1& 3264 & 4536\\
		\midrule 
		ARXIV &169343	&33&111.8\\
		MAG &1939743&31&112.5\\
		\bottomrule
	\end{tabular}
	
\end{table}

\section{Algorithms}
\label{sec:algorithms}

This section gives the CoralTDA reduction (\Cref{alg:coreReduction}) and PrunIT algorithm (\Cref{alg:dominatingSet}). Our pseudocode is optimized for clarity. Our implementation is available at \href{https://github.com/cakcora/PersistentHomologyWithCoralPrunit}{github.com/cakcora/PersistentHomologyWithCoralPrunit}.

\begin{algorithm}[ht]
	\caption{\hspace*{-4.3pt}{.} CoralTDA}
	\label{alg:coreReduction}
	\hspace*{\algorithmicindent} \textbf{Input:} $k$ and $\CG$ \\
	\hspace*{\algorithmicindent} \textbf{Output:} ${\CG}^{k+1}$ 
	\begin{algorithmic}[1]
		\STATE{flag=true} 
		\STATE{$\CG_{k+1} = \CG$}
		\WHILE{flag is true and  $\CG_{k+1}$ is not empty}
		\STATE{flag=false}
		\FOR{$u \in \V$}
		\IF{$\left |\N(u)\right|< (k+1)$}
		\STATE{flag=true}
		\STATE{ $\CG_{k+1} = \CG_{k+1}\setminus {u}$}
		\ENDIF
		\ENDFOR
		\ENDWHILE
		\RETURN $\CG_{k+1}$
	\end{algorithmic}
\end{algorithm}

\begin{algorithm}[ht]
	\caption{\hspace*{-4.3pt}{.} PrunIT}
	\label{alg:dominatingSet}
	\hspace*{\algorithmicindent} \textbf{Input:} $\CG=(\V,\E)$ \\
	\hspace*{\algorithmicindent} \textbf{Output:} $\CG'$ 
	\begin{algorithmic}[1]
		\STATE{iterate=true}
		\STATE{$\CG^\prime = \CG$}
		\WHILE{iterate is true}
		\STATE{iterate=false}
		\FOR{$u \in \V$}
		\FOR{$w \in \N(u)$}
		\STATE{$dom_{u\rightarrow w}=dom_{w\rightarrow u}=false$}
		\FOR{$n \in \{\N(u)\cup\N(w)\}$}
		\IF{$e_{un}\in \E \land e_{wn} \notin \E$}
		\STATE{$dom_{u\rightarrow w}=true$}
		\ENDIF
		\IF{$e_{uw}\notin \E \land e_{vw} \in \E$}
		\STATE{$dom_{w\rightarrow u}=true$}
		\ENDIF
		\ENDFOR
		\IF{$dom_{u\rightarrow w}=true \land dom_{w\rightarrow u}=false$}
		\STATE{ $\CG^\prime = \CG^\prime\setminus {w}$ and $iterate=true$}
		\ELSIF{$dom_{u\rightarrow w}=false \land dom_{w\rightarrow u}=true$}
		\STATE{ $\CG^\prime = \CG^\prime\setminus {u}$ and $iterate=true$}
		\ENDIF
		
		\ENDFOR
		\ENDFOR
		\ENDWHILE
		\RETURN $\CG^\prime$
	\end{algorithmic}
\end{algorithm}

In \Cref{alg:coreReduction}, we repeatedly delete vertices that have less than k+1 edges and return the resulting graph. In \Cref{alg:dominatingSet}, we search for dominated neighbors of a vertex (lines 7--14) and delete such a neighbor (if exists) from the graph. Here $\N(u)$ denotes the neighbors of $u$ (excluding $u$). We continue until we cannot find a dominated vertex. Lines 4--24  has a complexity of $\mathcal{O}(\left |V\right |^3)$. The outer loop (line 3) may run $\left |V\right |$, however we can parallelize the algorithm for each vertex.

\section{Proofs of the Theorems} \label{sec:proofs}

\subsection{Proof of CoralTDA}

First, we give the proof of Theorem \ref{thm:kcores}. We  use the same notation introduced in Section~\ref{sec:coresanddiagrams}.

\noindent {\bf Theorem~\ref{thm:kcores}:} \textit{Let $\CG$ be an unweighted connected graph. Let $f:\V\to\R$ be a filtering function on $\CG$. Let $PD_k(\CG,f)$ represent the $k^{th}$ persistence diagram for the sublevel filtration of the clique complexes. Let $\wh{\CG}^k$ be the $k$-core of $\CG$. Then, for any $j\geq k$
	$$PD_j(\CG,f)=PD_j(\CG^{k+1},f)$$ 	
}

\begin{proof} For simplicity, we prove the theorem for $j=k$. Then, we give the generalization to $j>k$ case. Fix $k\geq 1$. In order to show the theorem, we need to prove that $(b,d)\in PD_k(\wh{\CG})$ if and only if $(b,d)\in PD_k(\wh{\CG}^{k+1})$. In other words, if a $k^{th}$-homology class $\sigma=[S]$ is born at $\wh{\CG}_b$ and dies at $\wh{\CG}_d$, then the same homology class $[S]$ is born at $\wh{\CG}^{k+1}_b$ and dies at $\wh{\CG}^{k+1}_d$.

	We first prove that the inclusion map $\wh{\CG}_i^{k+1}\hookrightarrow\wh{\CG}_i$ in the Diagram \ref{eqn1} induces an isomorphism for the homology groups $H_k(\wh{\CG_i})=H_k(\wh{\CG}_i^{k+1})$ for any $0\leq i\leq m$. Then, the proof of the theorem will follow by the equivalence of the induced persistence modules.
	
	For simplicity, we omit the subscript $i$. Assume $\sigma\in H_k(\wh{\CG})$. In particular, $\sigma$ is a $k$-homology class of $\wh{\CG}$. This means there exists a $k$-cycle $S$ in the chain complex $C_k(\wh{\CG})$ with $\sigma=[S]$. We claim that any vertex $v$ in $S$ must have degree at least $k+1$, and hence $S\in C_k(\wh{\CG}^{k+1})$.

	As $S$ is $k$-cycle $\partial_k S=0\in C_{k-1}(\wh{\CG})$. Let $S=\sum_{i=1}^Nc_i\Delta_i$ where $\{\Delta_i\}$ are $k$-simplices in the simplicial complex $\wh{\CG}$. Then, $\partial S=\partial\sum_ic_i\Delta_i=\sum_ic_i\partial\Delta_i$. 
	Notice that if $\Delta_i \subset \wh{\CG}$, this implies any vertex of $\Delta_i=[w^i_0,w^i_1,\dots,w^i_k]$ has at least degree $k$ in $\CG$ as $\Delta_i$ has $k+1$ vertices $\{w^i_0,w^i_1,\dots,w^i_k\}$, and they are all pairwise connected by an edge in $\CG$. 
	Now, $\partial_k\Delta_i=\sum_{r=0}^{k}(-1)^r\Omega^i_r$ where $\Omega^i_r=[w^i_0,w^i_1,\dots, w^i_{r-1}, w^i_{r+1},\dots,w^i_k]$ are $k-1$ simplices in $\wh{\CG}$ for $0\leq r\leq k$, i.e. $\Omega^i_r$ is a $(k-1)$-face of $\Delta_i$.

	As $S$ being a $k$-cycle, and $\partial_k S=0$, the sum $\sum_ic_i\partial\Delta_i=0$. This implies that in the sum any $(k-1)$-chain $\Omega^i_r$ must cancel out with another $\Omega^j_s\subset \partial \Delta_j$ where $\Delta_j$ contains $\Omega^i_r$ and a vertex $w^j_s$ which is not a vertex in $\Delta_i$. Therefore, any vertex in $\Delta_i$ is connected to both all other vertices $\{w_r^i\}$ in $\Delta_i$ and another vertex $w_s^j\in \Delta_j\subset S$. Hence, any vertex in $\Delta^i$ has degree $k+1$. This can be generalized to any vertex in $S$. Hence, any vertex in $S$ has degree at least $k+1$. This proves by induction that $S\subset \wh{\CG}^{k+1}$  as follows. All vertices in $S$ has degree $k+1$ in $\CG$. Notice that the vertices of $S$ has degree $\geq k+1$ \underline{because of the other vertices in $S$}, not with the help of outsider vertices. So, as long as all the vertices in $S$ are still in $i$-core $\CG^i$, then any vertex of $S$ will still have degree  $\geq k+1$ in $\CG^i$. Therefore, By going inductively on the core index $i$, none of the vertices of $S$ are removed from $\CG^i$ for $1\leq i\leq k+1$. Therefore, $S$ is a $k$-cycle in $\wh{\CG}^{k+1}$, i.e. $S\in C_k(\wh{\CG}^{k+1})$. 
	
	This proves for $\partial_k:C_{k}(\wh{\CG})\to C_{k-1}(\wh{\CG})$ and $\partial^{k+1}_k: C_{k}(\wh{\CG}^{k+1})\to C_{k-1}(\wh{\CG}^{k+1})$, we have $\mbox{ker}\partial_k=\mbox{ker}\partial_k^{k+1}$ (Recall $\wh{\CG}^{k+1}\subset \wh{\CG}$). Notice that any vertex in any $(k+1)$-simplex in $\wh{\CG}$ has degree at least $k+1$. Then, $C_{k+1}(\wh{\CG})=C_{k+1}(\wh{\CG}^{k+1})$. This implies $\mbox{im}\partial_{k+1}=\mbox{im}\partial_{k+1}^{k+1}$ where $\partial^{k+1}_{k+1}: C_{k+1}(\wh{\CG}^{k+1})\to C_{k}(\wh{\CG}^{k+1})$. Then, $H_k(\wh{\CG})=H_k(\wh{\CG}^{k+1})$. This proves any $k$-cycle $S$ must be produced by the $k$-simplices in $\wh{\CG}^{k+1}$, and lower degree vertices cannot belong to $S$. 
	
	Also,  $k$-core of $\CG_i$ is equal to $i^{th}$ step of the filtration induced by $f:\V^k \to \R$, i.e.$(\CG^k)_i=(\CG_i)^k$. Hence, if $\sigma$ is born in $H_k(\wh{\CG}_b)$, then it is also born in $H_k(\wh{\CG}_b^{k+1})$. If it dies in $H_k(\wh{\CG}_d)$, it also dies in $H_k(\wh{\CG}_d^{k+1})$. This proves $PD_k(\CG)=PD_k(\CG^{k+1})$.
	
	For the generalization in $j>k$ case, one only needs to consider the same process for higher order cycles. For some $j>k$, let $S$ be a $j$-cycle in $\wh{\CG}$, i.e. $C_j(\wh{\CG})$. Then, by above any vertex in $S$ must have degree at least $j+1$. This implies $S$ must be $j$-cycle in $\wh{\CG}^{k+1}$, too. 
	
	Similarly, any vertex in a  $(j+1)$-simplex must have degree $\geq k+1$. This means $C_{j+1}(\wh{\CG})$ $=C_{j+1}(\wh{\CG}^{k+1})$. This implies $H_j(\wh{\CG})=H_j(\wh{\CG}^{k+1})$ for any $j\geq k$. This proves that the inclusion $\wh{\CG}_i^{k+1}\hookrightarrow\wh{\CG}_i$ induces an isomorphism for the homology groups $H_j(\wh{\CG_i})=H_j(\wh{\CG}_i^{k+1})$ for any $j\geq k$. This shows the equivalence of the induced persistence modules. The proof of the theorem follows. 
\end{proof}

\subsection{Proof of PrunIT} \label{sec:proof_Prunit}

Now, we give the proof of Theorem~\ref{thm:reduction}. We  use the same notation given in Section~\ref{sec:trim}.

\noindent {\bf Theorem~\ref{thm:reduction}:} Let $\CG=(\V,\E)$ be an unweighted graph, and $f:\V\to\R$ be a filtering function. Let $u\in\V$ be dominated by $v\in \V$ and $f(u)\geq f(v)$. Then, removing $u$ from $\CG$ does not change the persistence diagrams for sublevel filtration, i.e. for any $k\geq 0$ $$PD_k(\CG,f)=PD_k(\CG-\{u\},f).$$

\begin{proof} Let $\wh{\CG}_0\subset \wh{\CG}_1\subset \wh{\CG}_2\subset \dots\subset \wh{\CG}_m$ be the sequence of clique complexes in the induced sublevel filtration. Let $\alpha_{i_0-1}<f(v)\leq \alpha_{i_0}$.  This means for any $j\geq i_0$, $v\in \CG_j$, and hence $v$ is a vertex in $\wh{\CG}_j$.  Since $f(u)\geq f(v)$, $\alpha_{i_1-1}<f(u)\leq \alpha_{i_1}$ for some $i_1\geq i_0$. Similarly, this implies $u$ first appears in $\CG_{i_1}$, and  $u\in\wh{\CG}_j$ for $j\geq i_1$. In particular, $v$ belongs to all $\CG_j$ containing $u$. 
	
	Let $\CG'=\CG-\{u\}$. Define the sublevel filtration $\wh{\CG}'_0\subset \wh{\CG}'_1\subset \wh{\CG}'_2\subset ...\subset \wh{\CG}'_m$ for the same filtering function (restricted to $\V'$) $f:\V'\to\R$ with the same threshold set $\I$. 
	
	For any $j<i_1$, $\wh{\CG}_j=\wh{\CG}'_j$ as $u\not\in \CG_j$. Fix $j\geq i_1$. Then, $v\in \wh{\CG}_j$ as $f(u)\geq f(v)$. As $v$ dominates $u$ in $\CG$, $v$ dominates $u$ in $\CG_j$, too. Then, $\CG_j$ folds onto $\CG_j'=\CG_j-\{u\}$. Then, by Lemma \ref{lem:folding}, we have homotopy equivalence $\wh{\CG}_j\sim\wh{\CG}'_j$. Hence, for any $j\geq i_1$, this gives $\wh{\CG}_j\sim\wh{\CG}'_j$. 
	
	Recall that for any $j< i_1$,  $\wh{\CG}_j=\wh{\CG}'_j$. Therefore, for any $0\leq j\leq m$, the inclusion $\wh{\CG}'_j\hookrightarrow\wh{\CG}_j$ induces an isomorphism between the homology groups $H_k(\wh{\CG}'_j)\simeq H_k(\wh{\CG}_j)$. This proves the equivalence of the induced persistence modules, and hence the corresponding persistence diagrams, i.e. $PD_k(\CG)=PD_k(\CG')$ for any $k\geq 0$. The proof follows.
\end{proof}

Now, we prove Theorem~\ref{thm:Prunit_power}. Recall that $n^{th}$ power of graph $\CG=(\V,\E)$ is defined as $\CG^n=(\V,\E_n)$ Where $\E_n=\{e_{uv}\mid d(u,v)\leq n\}$. In other words, $\E_n$ is obtained by adding new edges to $\E$ connecting all the vertices whose graph distance $\leq n$. Then, the power filtration of $\CG$ is defined as $\wh{\CG}^0\subset \wh{\CG}^1\subset \wh{\CG}2\subset \dots \subset \wh{\CG}^N$ where $\wh{\CG}^n$ is the clique complex of $\CG^n$, $n^{th}$-power of $\CG$ and  $\wh{\CG}^0$ represent the vertex set $\V$~\cite{aktas2019persistence}. 

\noindent {\bf Theorem~\ref{thm:Prunit_power}:} [PrunIt for Power Filtration] Let $\CG=(\V,\E)$ be an unweighted connected graph. Let $\wh{PD}_k(\CG)$ represent $k^{th}$ persistence diagram of $\CG$ with power filtration. Let $u\in\V$ be dominated by $v\in \V$. Then, for any $k\geq 1$, $$\wh{PD}_k(\CG)=\wh{PD}_k(\CG-\{u\}).$$

\begin{proof} Notice that power filtration for a connected graph is trivial in dimension $0$ as all features but one dies at threshold $1$. So we will assume $k\geq 1$. Let $\HH=\CG-\{u\}$. We will show that $\wh{\CG}^n$ is homotopy equivalent to $\wh{H}^n$ for $n\geq 1$.
	
	Let $\{v=w_0, w_1, w_2, \dots, w_m\}$ be all adjacent vertices to $u$ in $\CG$. As $u$ is dominated by $v$, $\{w_1, w_2, \dots, w_m\}$ will be adjacent to $v$, too. We claim that for any vertex $z\neq u$ in $\CG$, $d(z,v)\leq d(z,u)$ where $d(z,z')$ is the length of the shortest path from $z$ to $z'$ and each edge has length $1$. In particular, any shortest path $\gamma$ from $z$ to $u$ must go through one of the adjacent vertices $\{v, w_1, w_2, \dots, w_m\}$. Let $\tau\subset \gamma$ be the segment starting at $z$ and ending with one of these adjacent vertices, say $w_i$. Then, $d(z,u)=\|\gamma\|=\|\tau\|+1$. Now, consider $d(z,v)$. Since $\gamma'=\tau\cup [w_i,v]$ is a path from $z$ to $v$, this implies $d(z,v)\leq \|\tau\|+1=d(z,u)$.
	
	Now, we claim that if $u$ is dominated by $v$ in $\CG$, then $u$ is dominated by $v$ in $\CG^n$ for any $n$. All we need to show that any adjacent vertex to $u$ in $\CG^n$ is also adjacent to $v$. If $z$ is adjacent to $u$ in $\CG^n$, then $d(z,u)\leq n$ in $\CG$ by the definition of $\CG^n$. By above, we have $d(z,v)\leq d(z,u)\leq n$. This implies $z$ is adjacent to $v$, too. Hence, $u$ is dominated by $v$. Furthermore, as $v$ dominates $u$ in $\CG$, no shortest path goes through $u$ unless the endpoint is $u$. Therefore, the distances in $\CG$ and $\HH$ would be same for any two vertices $z,z'\in \V-\{u\}$. Then, by Lemma~\ref{lem:folding}, $\wh{\CG}^n$ homotopy equivalent to $\wh{\HH}^n$ for any $n\geq 1$.
	
	Then, consider $\wh{PD}_k(\CG)$ for $k\geq 1$. For $k\geq 1$, any $k$-dimensional topological feature will be born in $\wh{\CG}^n$ where $n\geq 1$. As $\wh{\CG}^n$ homotopy equivalent to $\wh{\HH}^n$ for any $n\geq 1$, the proof follows.
\end{proof}

\begin{remark}  \label{remark:coralTDA_power} \normalfont [CoralTDA for Power Filtration] Of course, after the previous result, the natural follow-up question is \textit{"Does CoralTDA also extends to power filtrations?"}. Unfortunately, the answer to this question is "No". The reason is by the nature of power filtration, one keeps adding edges to the existing vertices, and the degrees of the vertices increase in each step. This means $k$-cores significantly changes during this process. A simple counterexample can be found in~\cite[Corollary 6.7]{adamaszek2013clique}, where the author completely classifies the topological types of clique complexes of the powers of cyclic graphs $\{C_n\}$, i.e., A graph which forms a circle. e.g., $C_5$ is a connected graph with 5 vertex and 5 edges. By definition, any $3$-core of a cyclic graph is empty. However, Adamazsek's result show that for any dimension $k$, there exists topological features $\wh{PD}_k(C_n)$ is nontrivial for $n\geq 2k+3$. If coralTDA could be extended to power filtrations, then $\wh{PD}_k(C_n)$ would be trivial for any $k\geq 3$.
\end{remark}

\begin{remark} \label{remark:GraphFiltrationLearning} \normalfont [Combining Reduction Algorithms with Graph Filtration Learning] In~\cite{hofer2020graph}, the authors studied a fundamental question, which is to find the “best filtering function” for the given classification problem. Indeed, the “right” filtering function is often the key behind the classification performance. Under the assumption that dominated vertex value is greater than its dominating vertex value, we can invoke our PrunIt result and run the “Graph Filtration Learning” algorithm of \cite{hofer2020graph} on the pruned graph which will lead to computational and (possibly) performance  gains. For node classification tasks, it would be also wise to check the relation between the class differences between dominated and their dominating vertices. With similar reasoning, our CoralTDA method can also be combined with the “Graph Filtration Learning” algorithm and other methods.
\end{remark}

\begin{remark} \label{rmk:strongcollapse} \normalfont [PrunIT vs. Strong Collapse] In \cite{boissonnat2018computing, boissonnat2020edge, boissonnat2021strong}, the authors introduced Strong Collapse method to effectively reduce the computational costs for the persistence diagrams. In this context, Strong Collapse method is defined for any simplicial complex sequence, whereas PrunIT ise specifically defined for the graph setting. The main difference between Strong Collapse and PrunIt is that PrunIT captures the dominated vertices in the graph at once, before filtration while Strong Collapse needs to be applied in each filtration step. In particular, for a given graph $\CG$ and the filtering function $f$, PrunIT detects the dominated vertices in the graph, and by removing them, gives a smaller graph $\CG'$ with the same persistence diagrams $PD(\CG,f)=PD(\CG'f)$. Since the detection of dominated vertices is done in the graph itself, the PrunIt algorithm works very fast. On the other hand, Strong Collapse method is designed to work with flag complexes. Therefore, if one wants to apply Strong Collapse method to compute $PD(\CG,f)$, one needs to go to the filtration sequence (flag complexes) $\wh{\CG}_1\subset \wh{\CG}_2\subset \dots \subset\wh{\CG}_N$, and apply the Strong Collapse method for each flag complex $\wh{\CG}_n$ for $1\leq n\leq N$ in the the filtration sequence. This means one needs to detect dominated vertices for each $\wh{\CG}_n$ and remove them. Therefore, especially if the filtration sequence is long ($N$ is large), PrunIT would be a more efficient method in graph setting.
	
	We run experiments on the on Email-Enron dataset with 36K nodes and 183K edges (see Table~\ref{tab:prunitresults}) for PrunIt and Strong Collapse methods with degree filtering function. We provide the results in the supplementary material. In particular, we tried two different threshold step sizes $\delta_1=4$ and $\delta_2=12$ which give filtration sequence of lengths $N_1=346$ and $N_2=116$ respectively. In Table~\ref{tab:scollapse}, we give the computation times in seconds for PrunIT and Strong Collapse algorithm to eliminate dominated vertices before PH computations. For PH computation step, recall that PH computation is cubic in terms of count of simplices~\cite{otter2017roadmap}. In Table~\ref{tab:scollapse}, we also give these simplex counts for each method. The details can be found in the supplementary material.  
\end{remark}

\begin{table*}[h!]
	\centering
	\caption{\footnotesize Comparison results for PrunIt and Strong Collapse Method for Email-Enron dataset. \label{tab:scollapse}}
	\setlength\tabcolsep{7.1 pt}
	\scriptsize
	
	\begin{tabular}{lcc|cc}
		\toprule
		& \multicolumn{2}{c}{\textbf{Computation time (sec)}} & \multicolumn{2}{c}{\textbf{Simplex Count (million)}} \\
		\textbf{Step size} & \textbf{PrunIt} &\textbf{S. Collapse} & \textbf{PrunIT} &\textbf{S.Collapse}\\
		\midrule
		4 &1412 &7014 & 270.2 & 465.2\\ 
		12 &513 &2520 & 90.7 & 155.8\\ 
		
		\bottomrule
	\end{tabular}
\end{table*}

\section{Further Reduction Results}

\subsection{Reduction for Graph and Node Classification Datasets} 
\label{sec:graphnode2}

In the following \Cref{fig:edge,fig:time,fig:complex}, we report further reduction percentages for each $PD_k(\CG)$ for the graph and node classification datasets given in \Cref{tab:dataset}. Here, the x-axes represent the dimension $k$ for $PD_k(\CG)$. Edge and simplex reduction figures closely resemble the vertex reduction in \Cref{fig:vertex}. In time reduction of \Cref{fig:time}, we see three datasets (OHSU, FACEBOOK, TWITTER) with diverging behavior as follows. 

\begin{figure*}[ht]
	\begin{subfigure}{.24\textwidth}
		\centering
		\includegraphics[width=.99\linewidth]{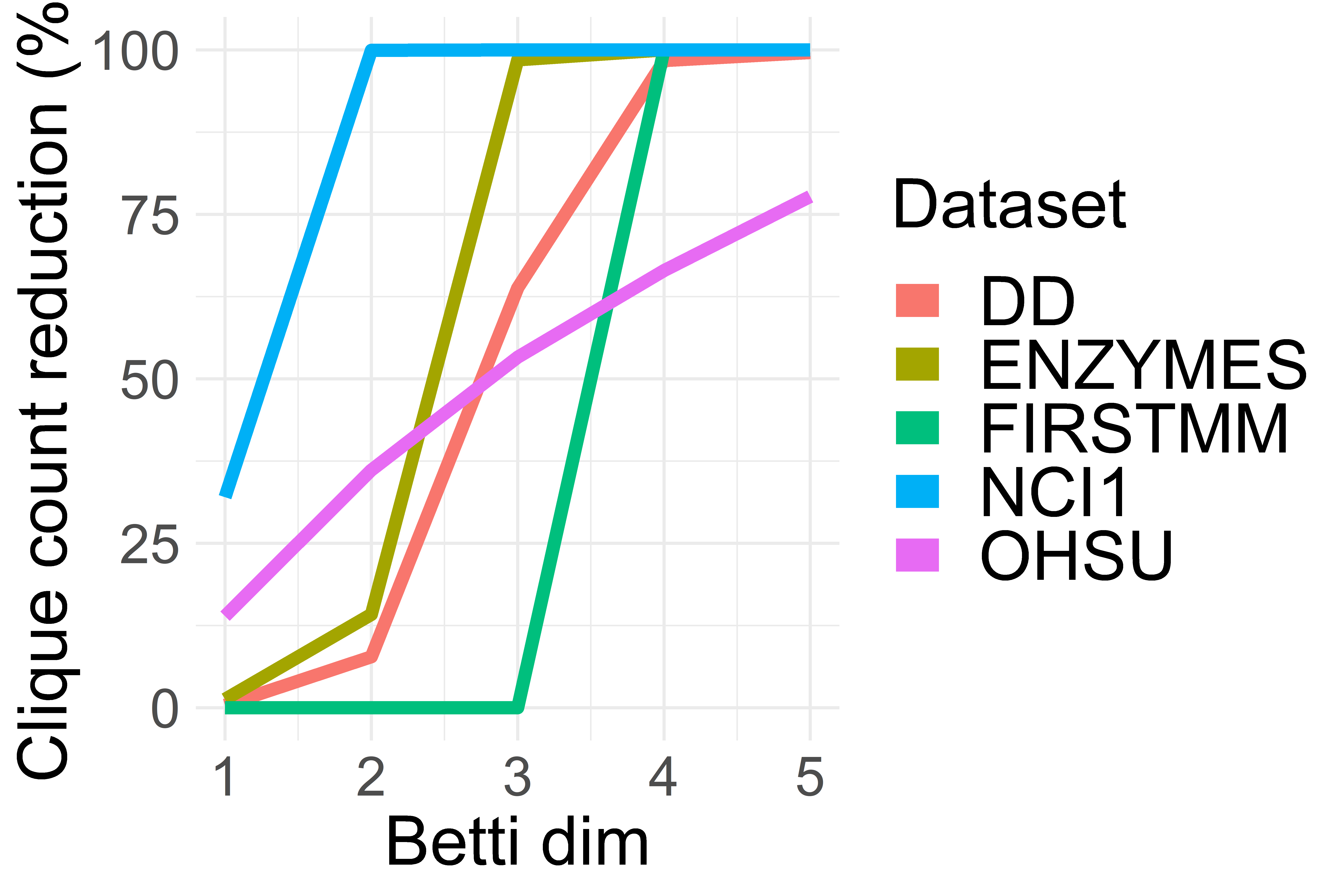}  
	\end{subfigure}
	\begin{subfigure}{.24\textwidth}
		\centering
		\includegraphics[width=.99\linewidth]{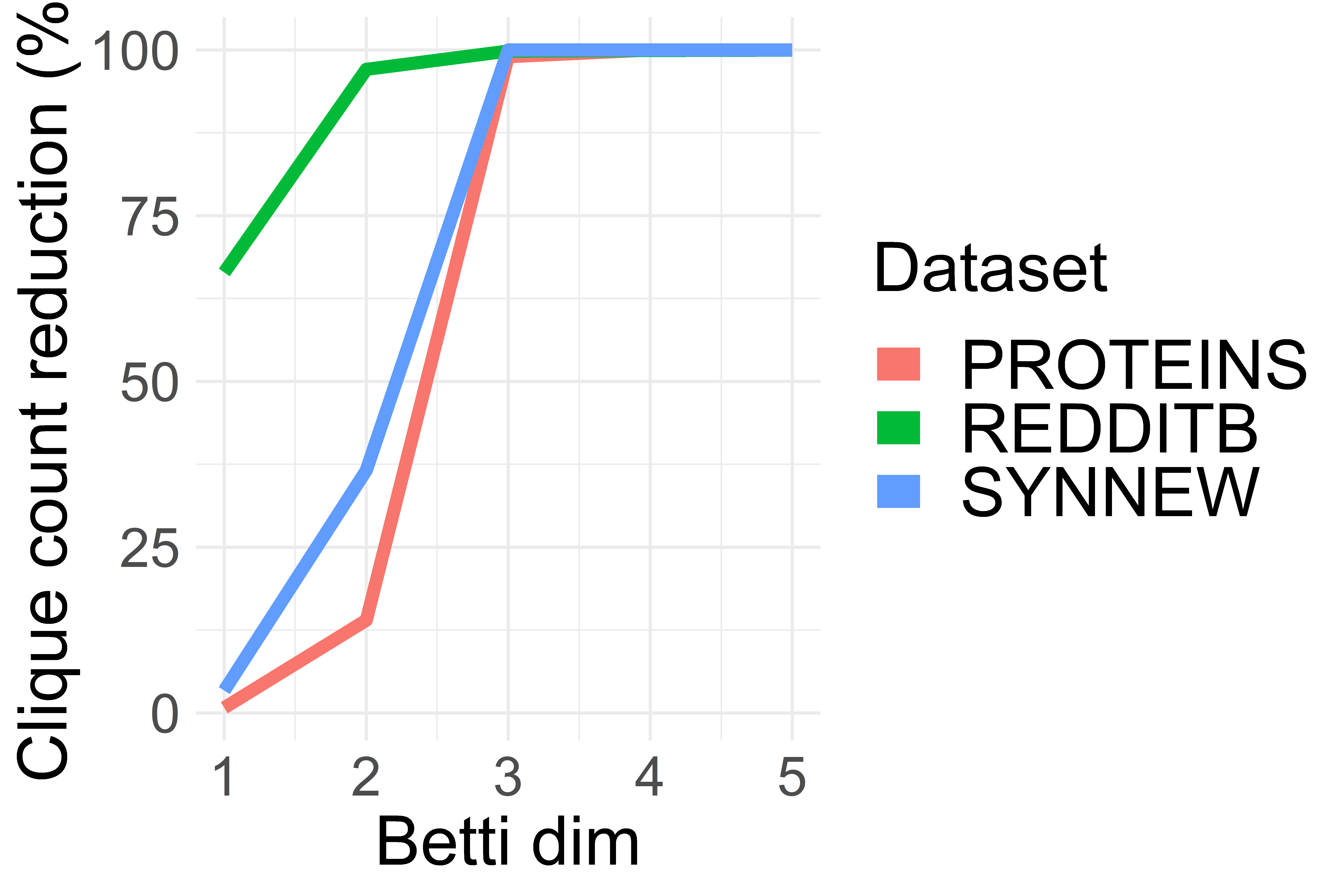}  
	\end{subfigure}
	\begin{subfigure}{.24\textwidth}
		\centering
		\includegraphics[width=.99\linewidth]{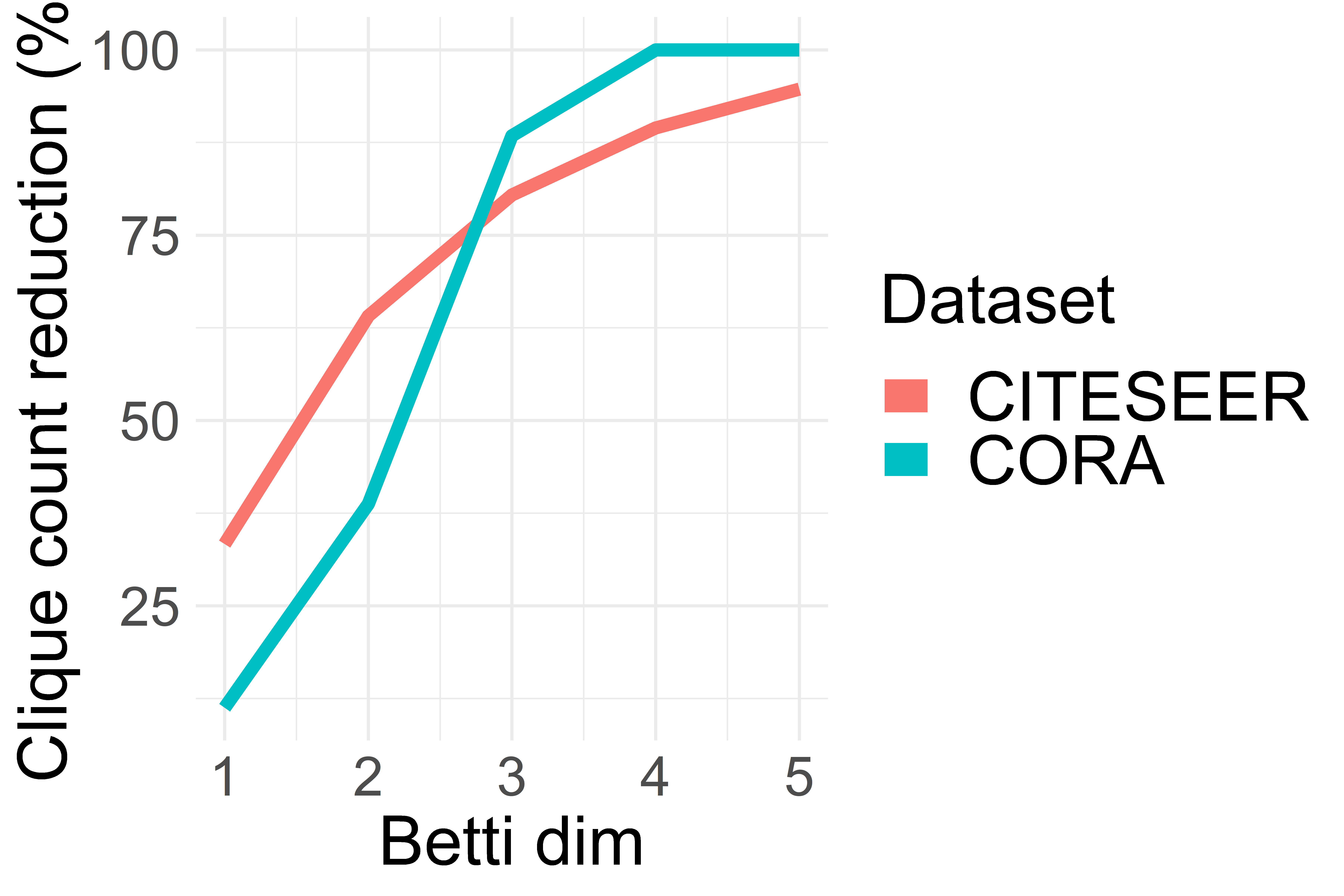}  
	\end{subfigure}
	\begin{subfigure}{.24\textwidth}
		\centering
		\includegraphics[width=.99\linewidth]{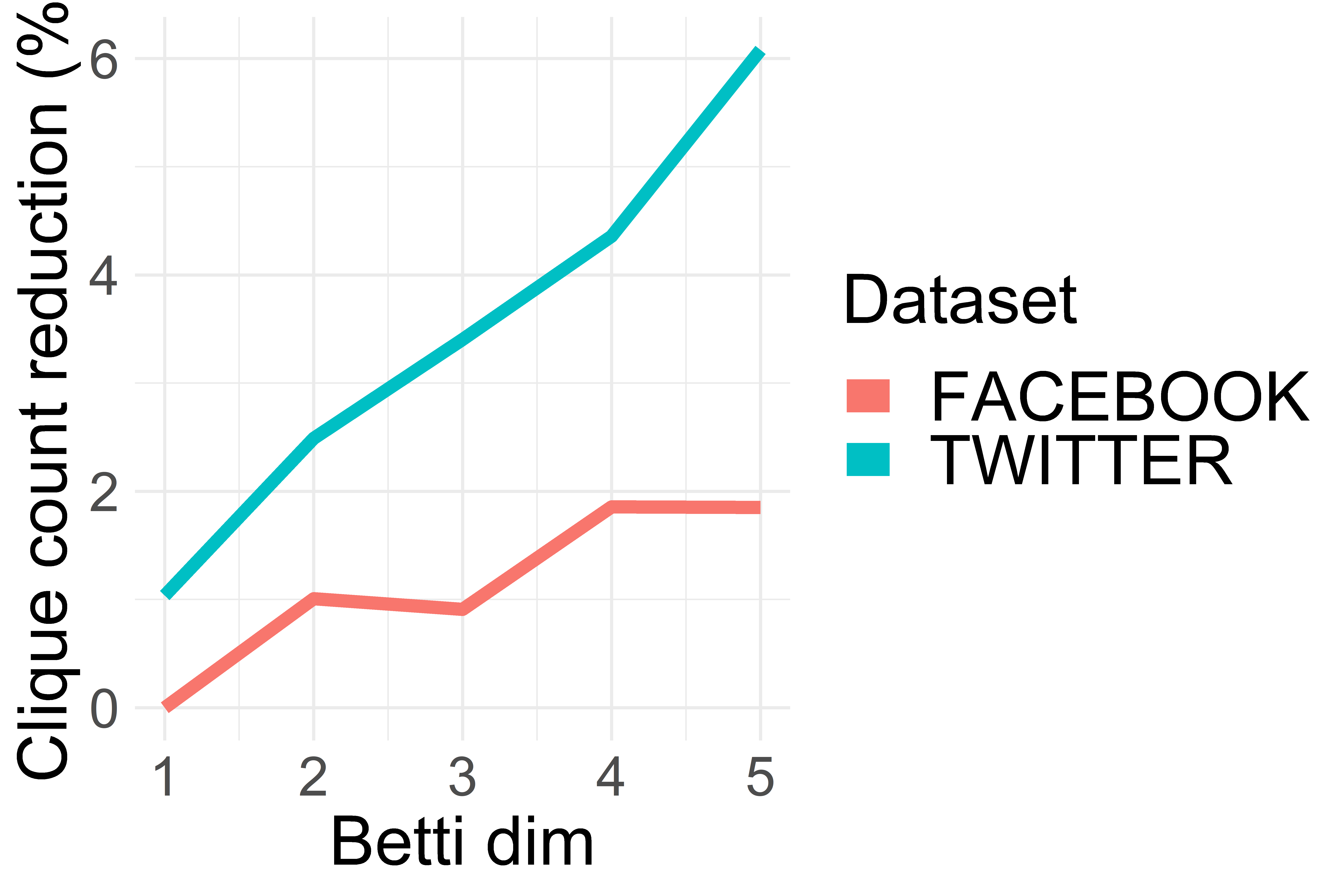}  
	\end{subfigure}
	\caption{CoralTDA clique count reduction in graph and node classification datasets (higher is better).}
	\label{fig:complex}
\end{figure*}

OHSU, despite losing many of vertices in core reduction, does not have a time reduction as much as the other kernel datasets. We explain this behavior with small graph orders but high coreness in OHSU. Note that unlike any other kernel dataset, vertex reduction does not reach 100\% in OHSU graphs. Calling core decomposition on OHSU graphs adds a time cost that takes away the benefit of coralTDA to some extent. As a result, time reduction is at most 25\%.

A similar dynamic is seen in TWITTER and FACEBOOK, where graphs have high cores. As a result, most vertices cannot be peeled away in core reduction (\Cref{fig:vertex} shows at most ~20\% vertex reduction). 

\begin{figure*}[h]
	\begin{subfigure}{.24\textwidth}
		\centering
		\includegraphics[width=.99\linewidth]{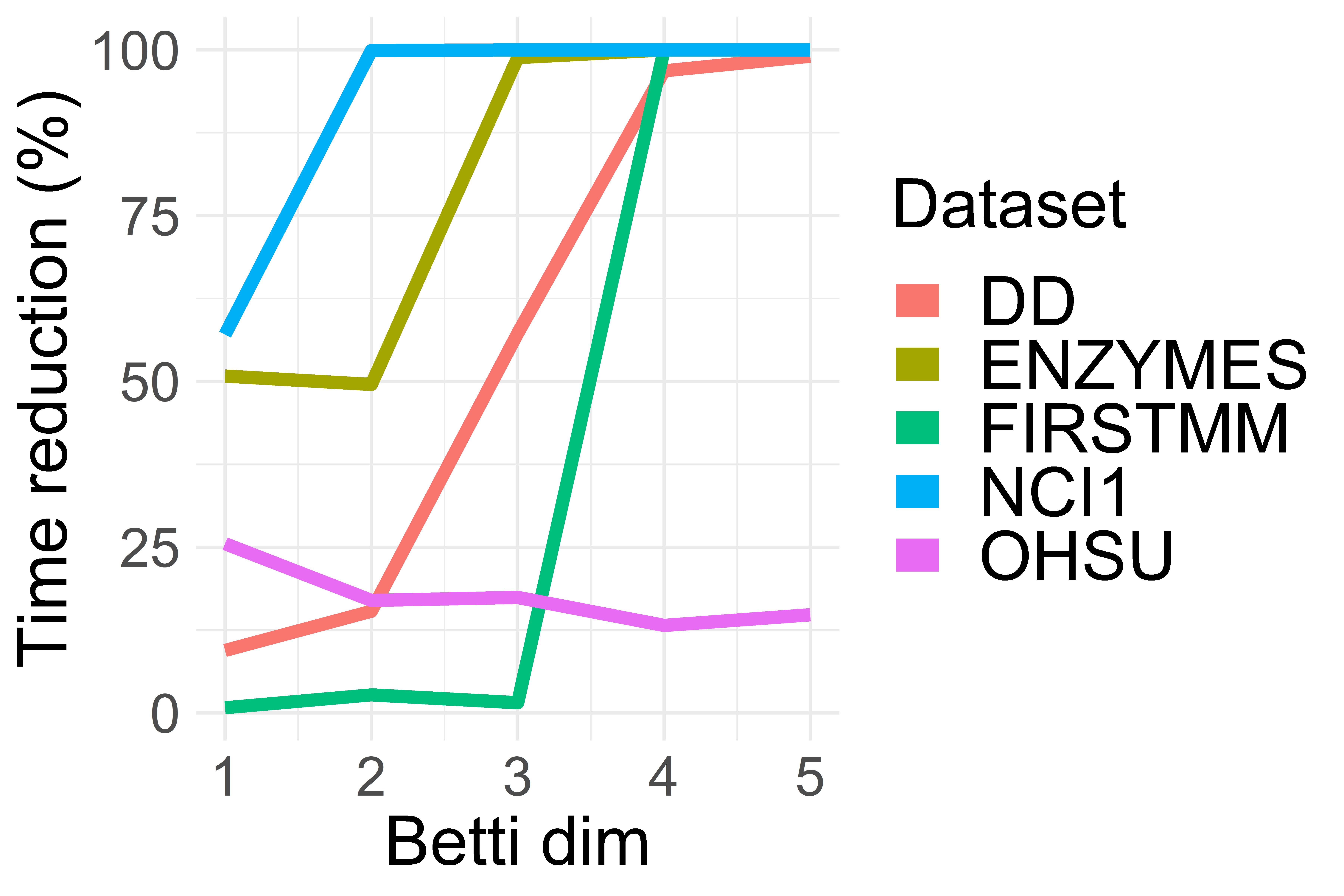}  
	\end{subfigure}
	\begin{subfigure}{.24\textwidth}
		\centering
		\includegraphics[width=.99\linewidth]{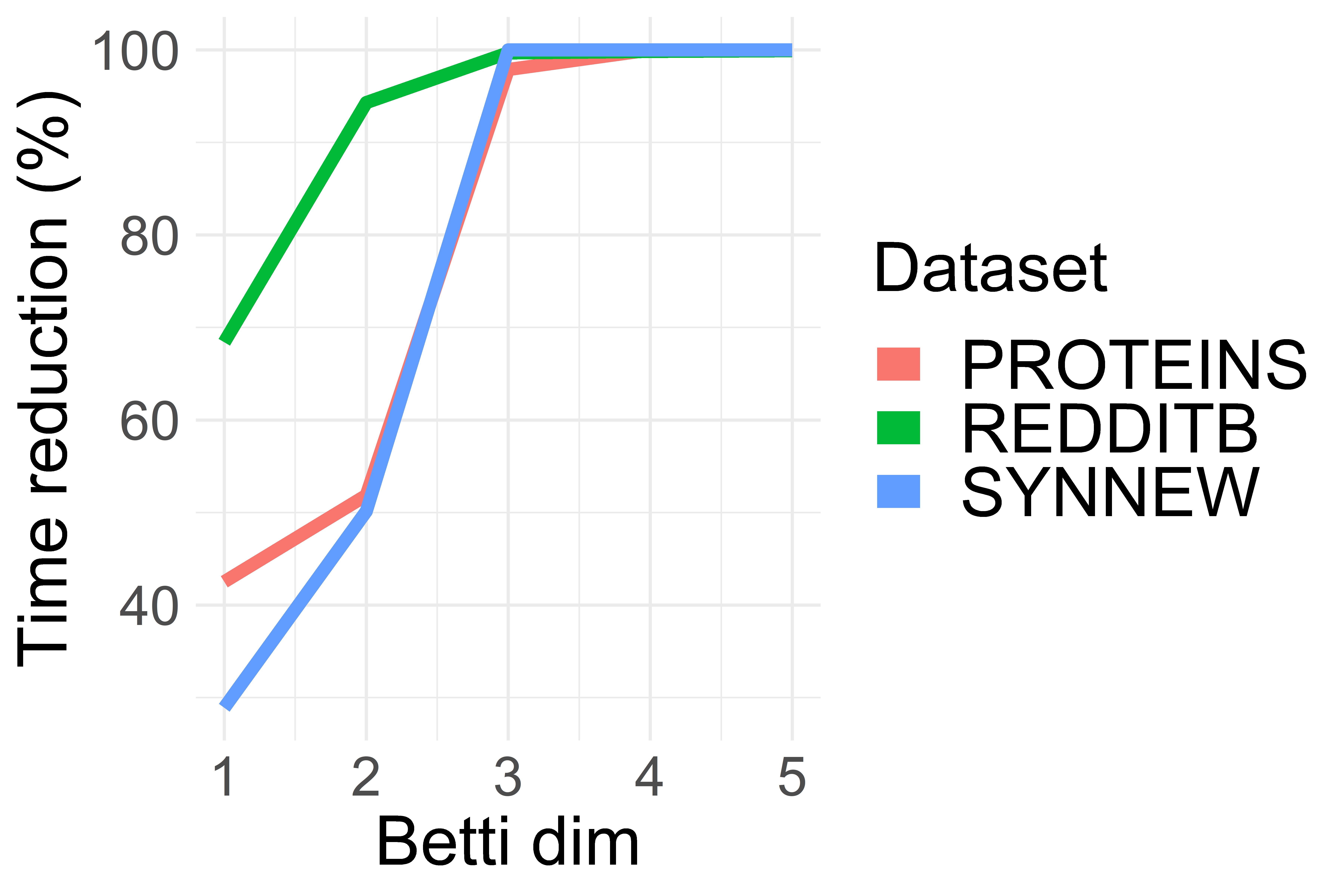}  
	\end{subfigure}
	\begin{subfigure}{.24\textwidth}
		\centering
		\includegraphics[width=.99\linewidth]{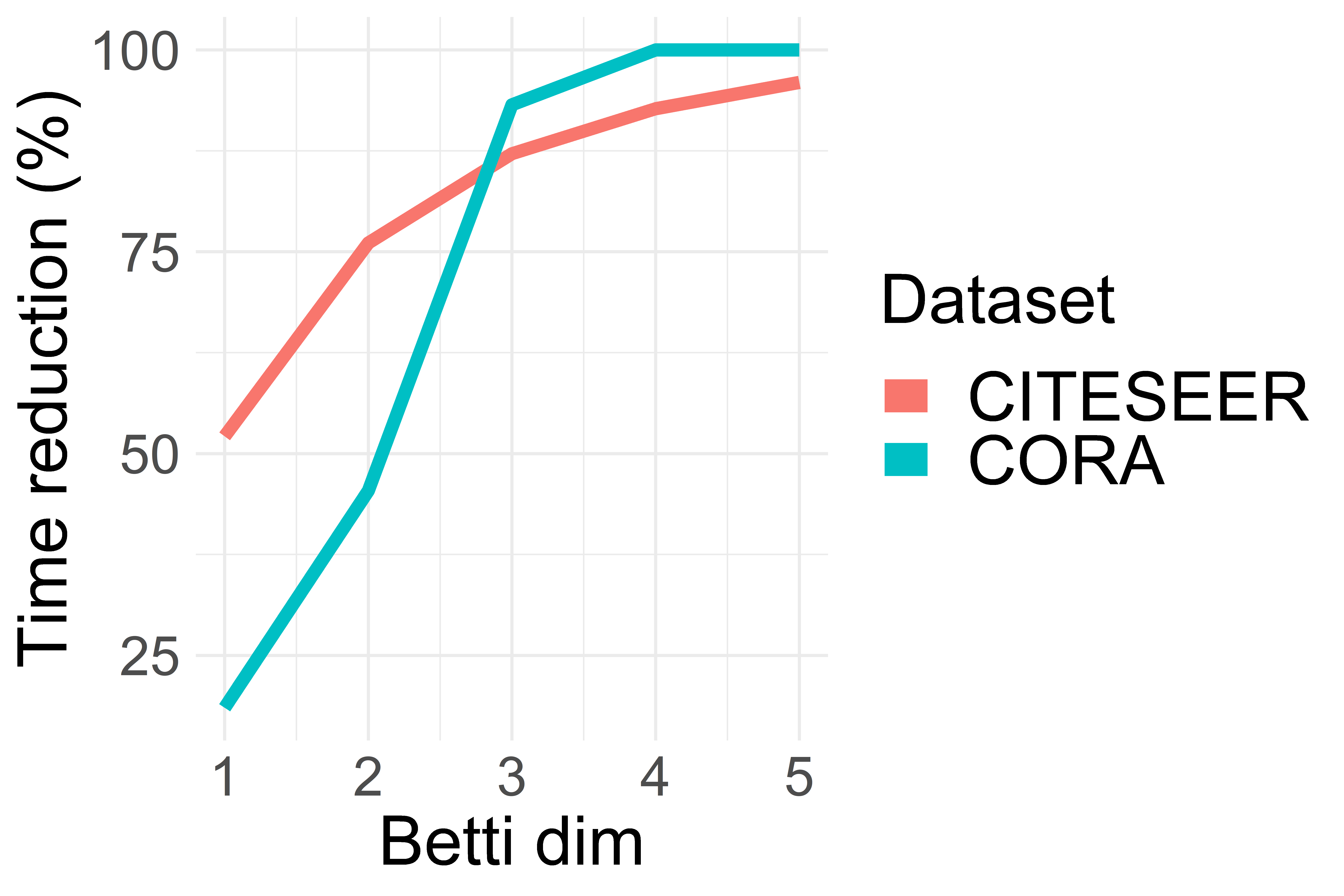}  
	\end{subfigure}
	\begin{subfigure}{.24\textwidth}
		\centering
		\includegraphics[width=.99\linewidth]{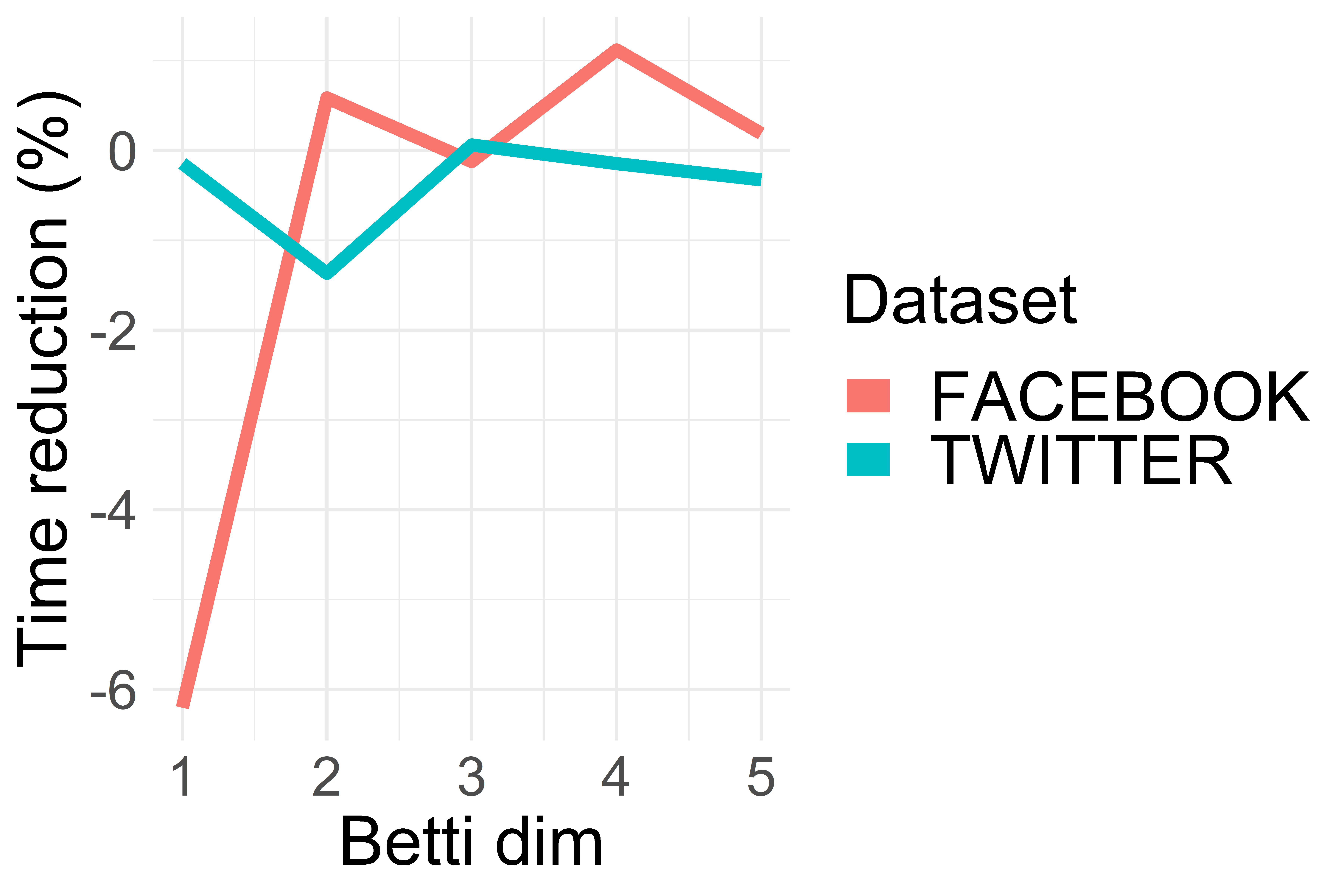}  
	\end{subfigure}
	\caption{CoralTDA time reduction in graph and node classification datasets (higher is better). We do not observe a reduction in Facebook and Twitter datasets. The added computational cost of the algorithm results in negative gains.}
	\label{fig:time}
\end{figure*}

\begin{figure*}[h!]
	\begin{subfigure}{.24\textwidth}
		\centering
		\includegraphics[width=.99\linewidth]{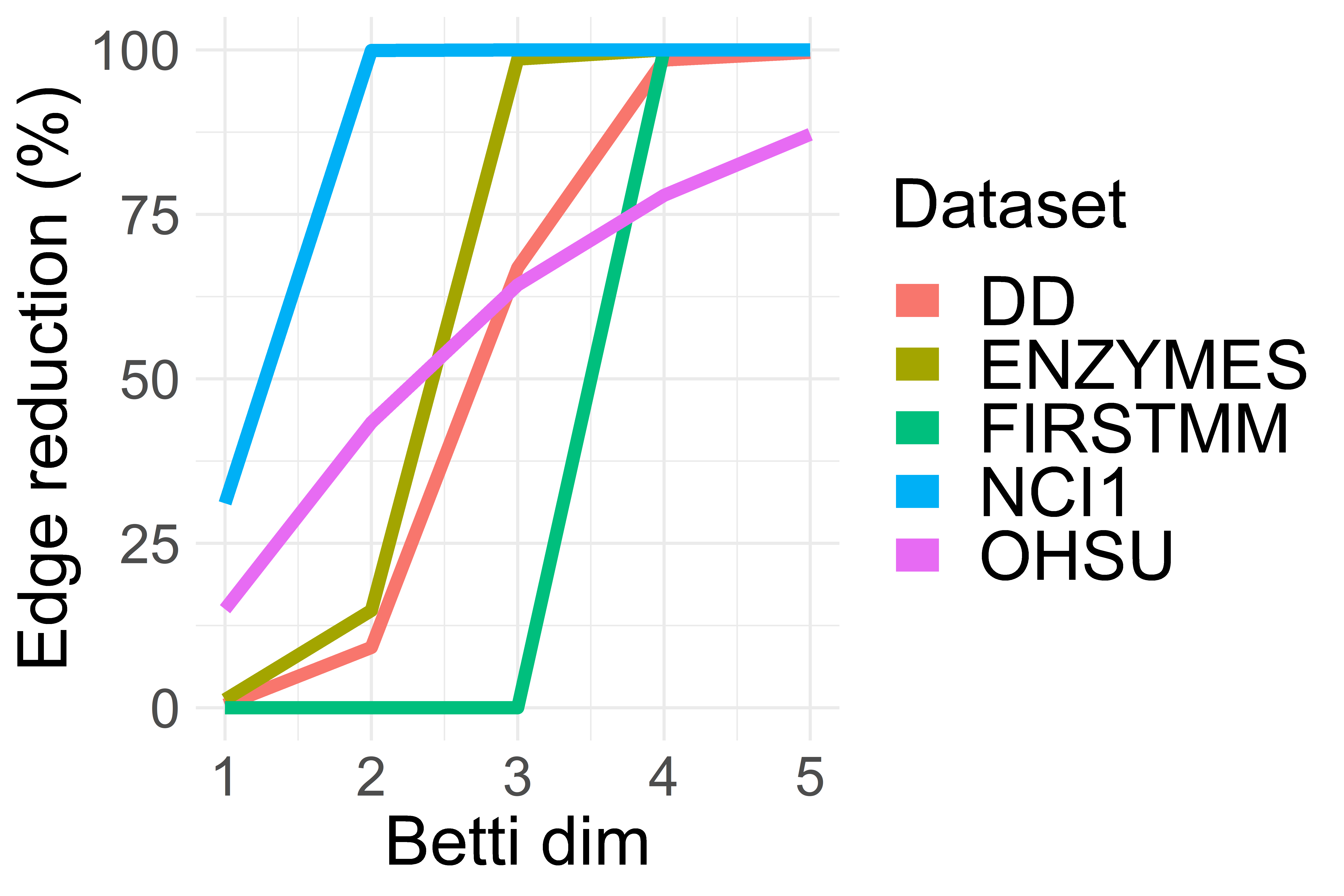}  
	\end{subfigure}
	\begin{subfigure}{.24\textwidth}
		\centering
		\includegraphics[width=.99\linewidth]{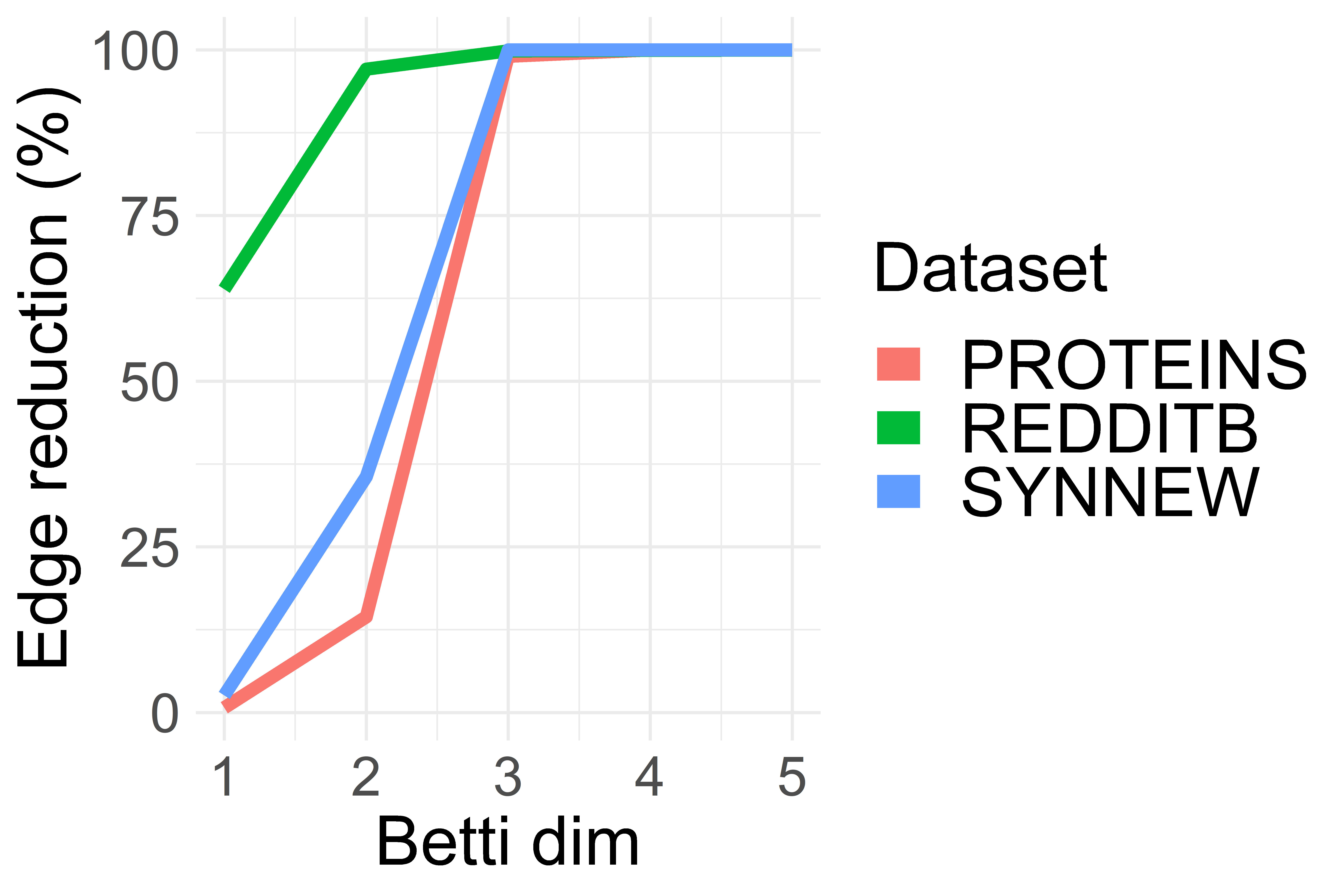}  
	\end{subfigure}
	\begin{subfigure}{.24\textwidth}
		\centering
		\includegraphics[width=.99\linewidth]{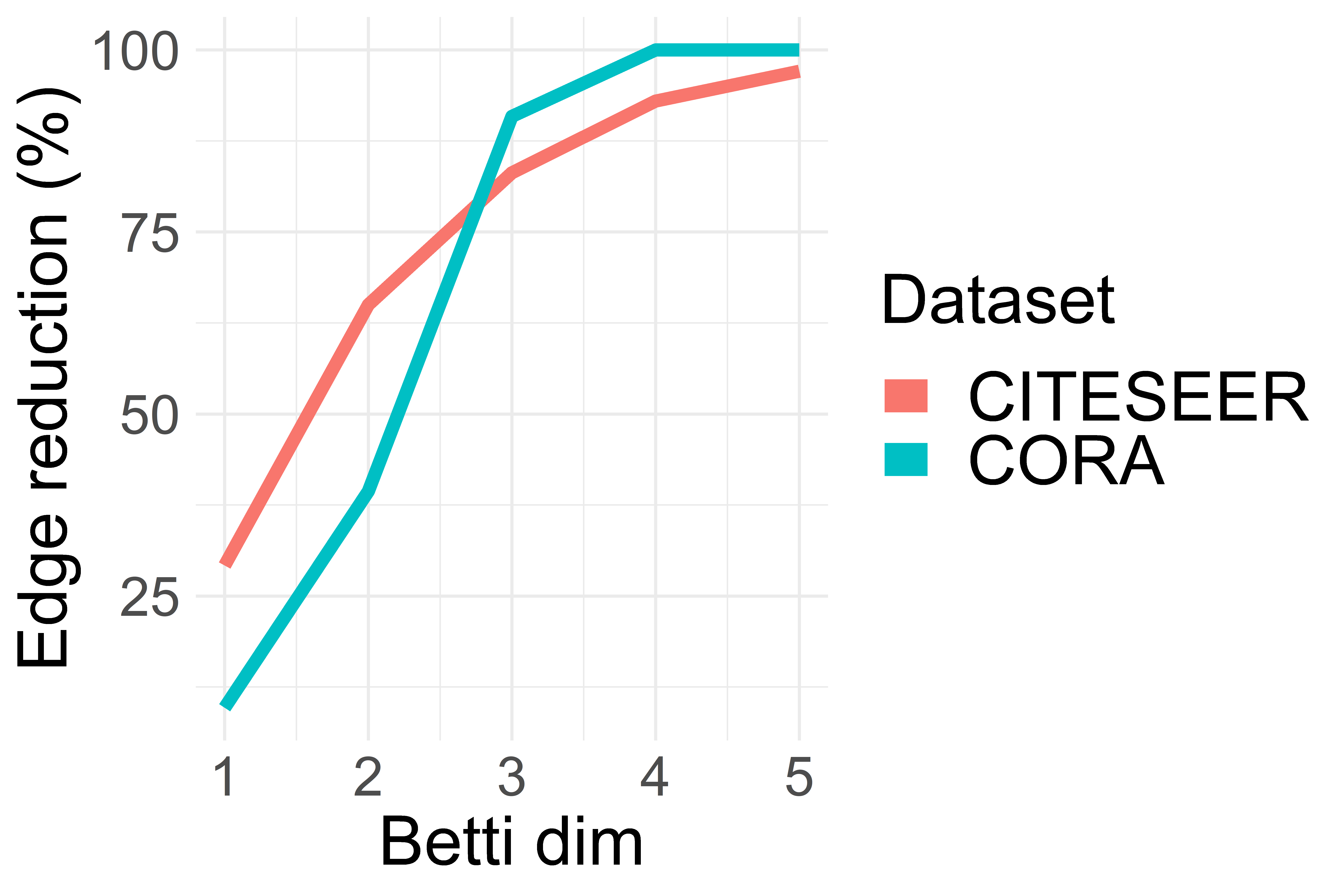}  
	\end{subfigure}
	\begin{subfigure}{.24\textwidth}
		\centering
		\includegraphics[width=.99\linewidth]{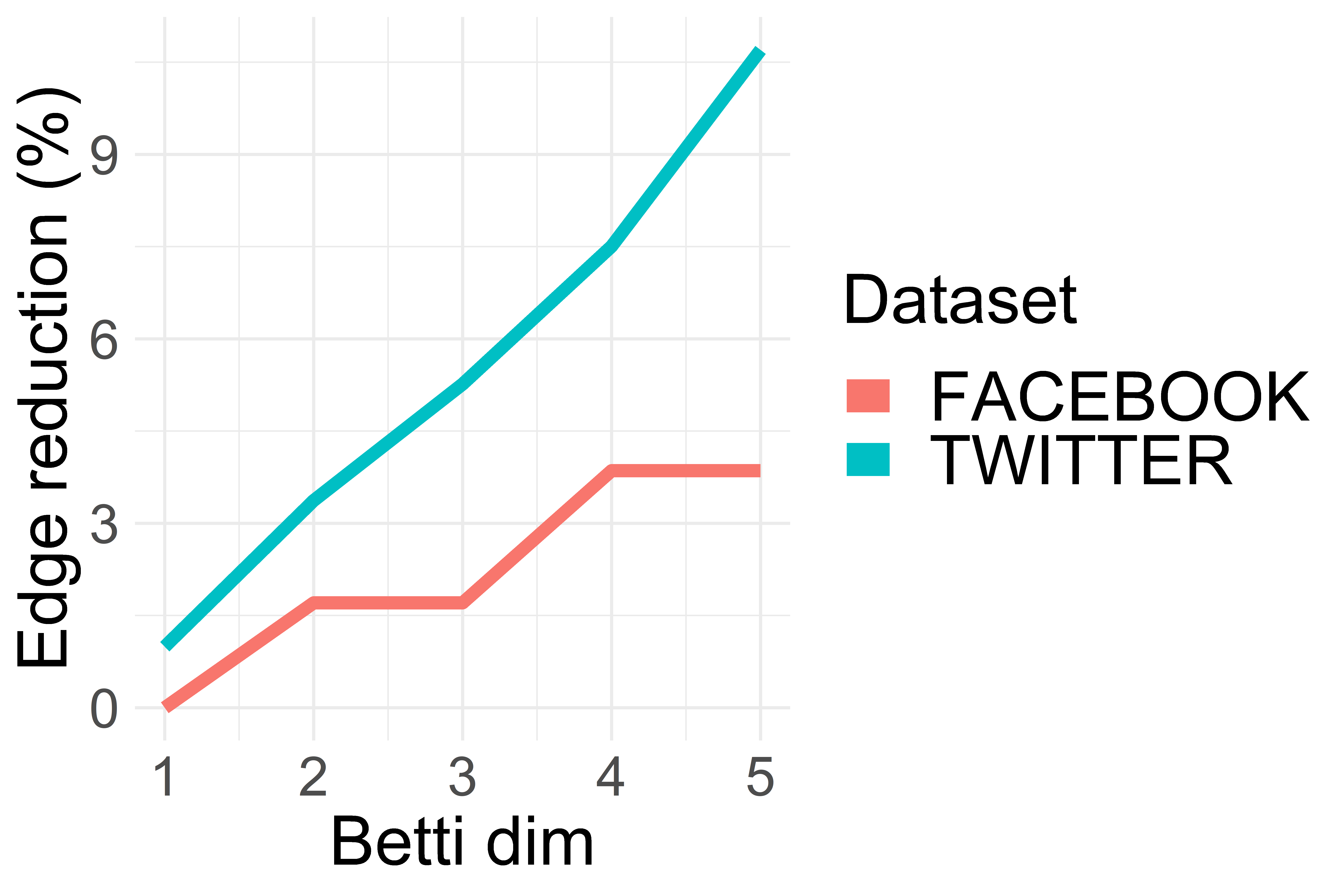}  
	\end{subfigure}
	\caption{CoralTDA edge reduction ($100\times(\left |\E\right |-\left |\E^k\right |) / \left |\E\right |$) in graph and node classification datasets (higher is better). Reduction values are averages from graph instances of the datasets.}
	\label{fig:edge}
\end{figure*}

\subsection{Clustering Coefficient and Higher Persistence Diagrams}
While the proposed reduction algorithms provide a computationally efficient framework for higher-order topological summaries of graphs, in many real-world applications, we still need to address the important question of whether a given network exhibits a nontrivial higher ($k\geq 2)$ persistence diagram $PD_k(\CG)$. When the $(k+1)$-core $\CG^{k+1}$ is sufficiently small, the answer immediately follows from our result. However, the problem is substantially more challenging in applications involving large complex networks such as blockchain transactions and protein interaction graphs. The goal is to assess whether for $k=2,3$, $PD_k(\CG)$ is trivial or not. Indeed, such higher homological features might be associated, for example, with money laundering patterns or drug-target interactions.  

\begin{figure*}[h!]  \centering
	\begin{subfigure}{.39\textwidth}
		\centering
		\includegraphics[width=.99\linewidth]{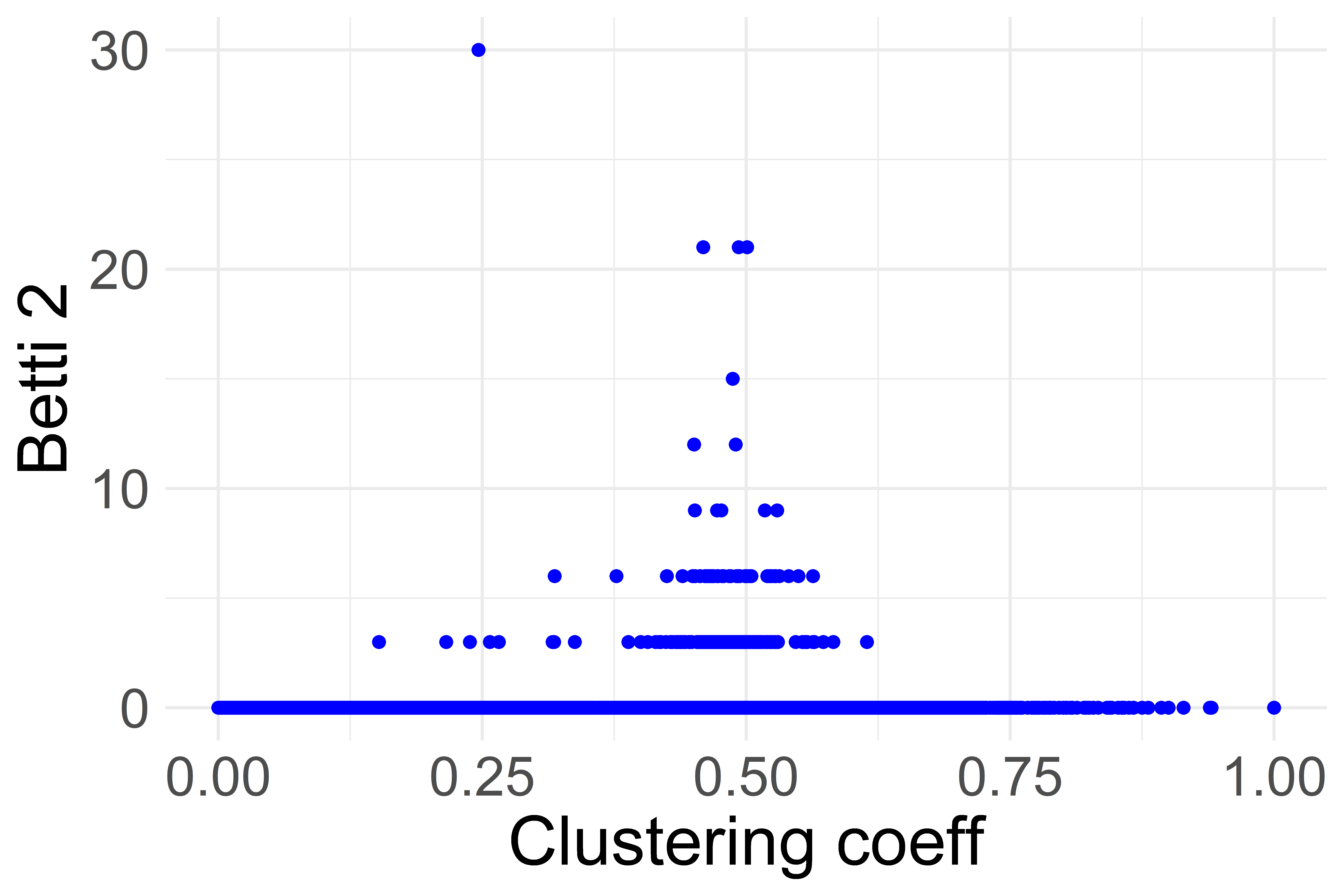}
	\end{subfigure}
	\begin{subfigure}{.39\textwidth}
		\centering
		\includegraphics[width=.99\linewidth]{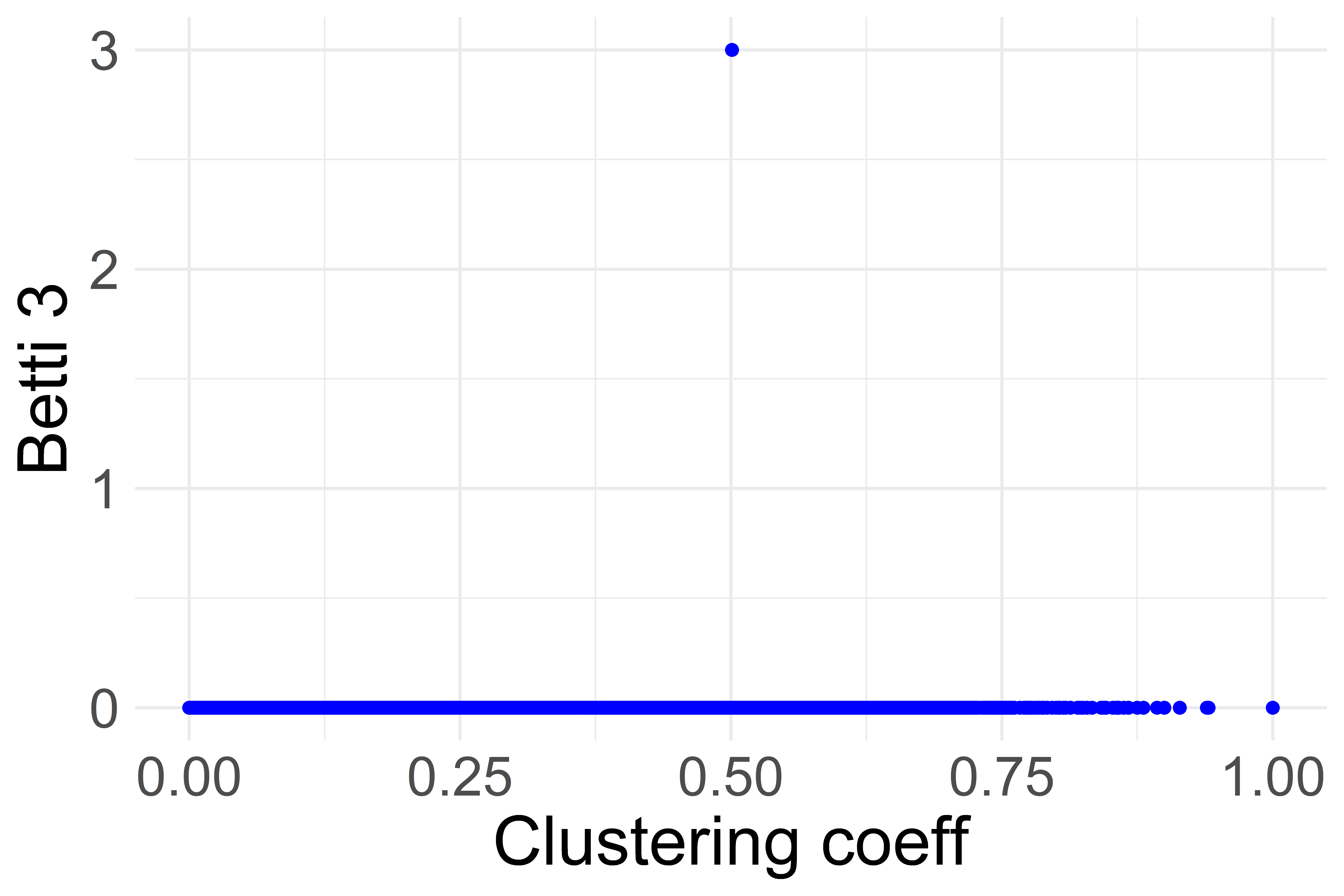}
	\end{subfigure}
	\caption{Clustering coefficients in kernel datasets. Betti 3 and higher do not exist in these graph datasets. In fact, even Betti 3 exists in a single graph only. Most Betti computations on such graphs can be avoided by using our conjecture on the clustering coefficient.}
	\label{fig:cluskernel}
\end{figure*}


Our experiments show that many real-life data sets tend to exhibit nontrivial second and third persistence diagrams (see~\Cref{fig:clussocial,fig:cluskernel}). We compare these datasets by considering their clustering coefficients $CC(\CG)$. Our observations indicate that when $CC(\CG)$ is too low or too high, then the higher-order persistence diagrams associated with these data tend to be trivial. Intuitively, when $CC(\CG)$ is too low, the graph $\CG$ tends to be too sparse to form a $k$-cycle and, hence, is unlikely to produce a nontrivial $PD_k(\CG)$. When $CC(\CG)$ is too high, then the graph $\CG$ is too dense and every $k$-cycle is filled immediately by a $(k+1)$-complex in $\wh{\CG}$. We formulate this phenomenon in the form of the following conjecture.

\noindent {\bf Conjecture:} For $k\geq 2$, there are $0<\alpha_k<\beta_k<1$ such that when $CC(\CG)>\beta_k$ and $CC(\CG)<\alpha_k$, then $PD_k(\CG)= \emptyset$ with high probability.

However, clustering coefficient has the computational complexity of $O(\left|\E\right|^{1.48})$~\cite{boot2016algorithms}, hence the using the coefficient may not seem as an improvement over coral reduction, which has a complexity of $O(\left |\E\right |+\left|\V\right|)$. However, note that the clustering coefficient can be iteratively computed as the average of vertex clustering coefficients (in a parallel setting). In this approach, a stopping condition can be applied to terminate early when the coefficient can be approximated. The gain in computational time may help the coral technique on extremely large graphs or their induced subgraphs.

\section{Checklist}

\begin{enumerate}

	\item For all authors...
	\begin{enumerate}
		\item Do the main claims made in the abstract and introduction accurately reflect the paper's contributions and scope?
		\answerYes{See abstract for details.}
		\item Did you describe the limitations of your work?
		\answerYes{See Section~\cref{sec:experiments}}
		\item Did you discuss any potential negative societal impacts of your work?
		\answerYes{See Appendix~D.} 
		\item Have you read the ethics review guidelines and ensured that your paper conforms to them?
		\answerYes{}
	\end{enumerate}

	\item If you are including theoretical results...
	\begin{enumerate}
		\item Did you state the full set of assumptions of all theoretical results?
		\answerYes{Please see the proofs given in Appendix~C}.
		\item Did you include complete proofs of all theoretical results?
		\answerYes{} Please see the proofs given in Appendix.
	\end{enumerate}

	\item If you ran experiments...
	\begin{enumerate}
		\item Did you include the code, data, and instructions needed to reproduce the main experimental results (either in the supplemental material or as a URL)?
		\answerYes{}
		\item Did you specify all the training details (e.g., data splits, hyperparameters, how they were chosen)?
		\answerYes{} Please see Section~\ref{sec:experiments}.
		\item Did you report error bars (e.g., with respect to the random seed after running experiments multiple times)?
		\answerYes{} Our experiments report averages with deviations (See Figure~\ref{fig:combinedresults}).
		\item Did you include the total amount of compute and the type of resources used (e.g., type of GPUs, internal cluster, or cloud provider)?
		\answerYes{} The experimental setting is given in Section~\ref{sec:experiments}.
	\end{enumerate}

	\item If you are using existing assets (e.g., code, data, models) or curating/releasing new assets...
	\begin{enumerate}
		\item If your work uses existing assets, did you cite the creators?
		\answerYes{}
		\item Did you mention the license of the assets?
		\answerYes{}
		\item Did you include any new assets either in the supplemental material or as a URL?
		\answerNA{}
		\item Did you discuss whether and how consent was obtained from people whose data you're using/curating?
		\answerNA{}
		\item Did you discuss whether the data you are using/curating contains personally identifiable information or offensive content?
		\answerNA{} We use graph data only. Our datasets do not contain personally identifiable information or offensive content.
	\end{enumerate}

	\item If you used crowdsourcing or conducted research with human subjects... 
	\begin{enumerate}
		\item Did you include the full text of instructions given to participants and screenshots, if applicable?
		\answerNA{}
		\item Did you describe any potential participant risks, with links to Institutional Review Board (IRB) approvals, if applicable?
		\answerNA{}
		\item Did you include the estimated hourly wage paid to participants and the total amount spent on participant compensation?
		\answerNA{}
	\end{enumerate}

\end{enumerate}

\section{Broader Impact} \label{sec:broader}

The proposed methodology makes an important step toward addressing the major existing roadblock in bringing the powerful machinery of TDA to the analysis of large-scale networks, from online social media to gene-to-gene interactions in bioinformatics. Undoubtedly, the application of TDA in learning such large-scale networks will have a substantial positive impact in a broad range of applications such as classification of anomalous subgraphs in blockchain transaction networks, discovering links between side effects and drugs, and drug re-purposing.
However, the critical negative impact of the proposed methodology is associated with our current inability to accurately quantify the loss of topological information due to pruning and the influence of such loss on the final learning task. This is a fundamental question that needs to be addressed in the future.

\end{document}